\def\colorful{1}
\newcommand{\blue}[1]{{{\color{blue}#1}}}
\newcommand{\red}[1]{{\color{red} {#1}}}
\newcommand{\blue}[1]{{{#1}}}
\newcommand{\red}[1]{{{#1}}}
\newcommand{\ignore}[1]{}
\newcommand{\rnote}[1]{\footnote{{\bf \color{orange}Rocco:} {#1}}}
\newcommand{\flatten}[2]{({#1})^{{#2}}}
\newcommand{\norm}[1]{\left\|#1\right\|}
\title{
Near--Optimal Density Estimation in Near--Linear Time
Using Variable--Width Histograms}
\author{Siu-On Chan\\
Microsoft Research, New England\\
{\tt sochan@gmail.com}.\\
\and
Ilias Diakonikolas\thanks{Supported by EPSRC grant EP/L021749/1, and a Marie Curie Career Integration Grant.}\\
University of Edinburgh\\
{\tt ilias.d@ed.ac.uk}.\\
\and
Rocco A. Servedio\thanks{Supported by NSF grants CCF-0915929 and CCF-1115703.}\\
Columbia University\\
{\tt rocco@cs.columbia.edu}.\\
\and
Xiaorui Sun\thanks{Supported by NSF grant CCF-1149257.}\\
Columbia University \\
{\tt xiaoruisun@cs.columbia.edu}.
}
\begin{document}

\maketitle

\begin{abstract}
Let $p$ be an unknown and arbitrary probability distribution over $[0,1)$.  We consider the problem
of \emph{density estimation}, in which a learning algorithm is given i.i.d. draws from $p$ and must
(with high probability) output a hypothesis distribution that is close to $p$.  The main contribution of this paper is
a highly efficient density estimation algorithm for learning using a variable-width histogram, i.e., a
hypothesis distribution with a piecewise constant probability density function.

In more detail,  for any $k$ and $\eps$, we
give an algorithm that makes $\tilde{O}(k/\eps^2)$ draws from $p$, runs in $\tilde{O}(k/\eps^2)$ time, and outputs a
hypothesis distribution $h$ that is piecewise constant with $O(k \log^2(1/\eps))$ pieces.  With high probability the 
hypothesis $h$ satisfies $\dtv(p,h) \leq C \cdot \opt_k(p) + \eps$, where $\dtv$ denotes the total variation distance
(statistical distance), $C$ is a universal constant,
and $\opt_k(p)$ is the smallest total variation distance between $p$ and any $k$-piecewise constant distribution.
The sample size and running time of our algorithm are optimal up to logarithmic factors.
The ``approximation factor'' $C$  in our result is inherent in the problem, as we prove that
no algorithm with sample size bounded in terms of $k$ and $\eps$ can achieve $C<2$ regardless 
of what kind of hypothesis distribution it uses.
\end{abstract}

\section{Introduction}

Consider the following fundamental statistical task: {\em Given independent
draws from an unknown probability distribution, what is the minimum sample size needed to obtain an accurate estimate of the distribution?}
This is the question of {\em density estimation},  a classical problem in statistics with a rich history and an extensive literature
(see e.g.,~\cite{BBBB:72, DG85, Silverman:86,Scott:92,DL:01}).
While this broad question has mostly been studied from an  information--theoretic perspective, it is an inherently algorithmic question as well, 
since the ultimate goal is to describe and understand algorithms that are both computationally and information-theoretically efficient.
The need for computationally efficient learning algorithms\ignore{ has long been recognized by the statistics and machine learning communities, and} is only becoming more acute with the recent flood of data across the sciences;
the ``gold standard'' in this ``big data'' context is an algorithm with information-theoretically (near-) optimal sample size 
and  running time (near-) linear in its sample size.

\ignore{
An important class of ``big data'' problems can be modeled as a sample from a probability distribution over a very large (or even infinite) domain.}
In this paper we consider learning scenarios in which an algorithm is given an input data set which is a sample of i.i.d. draws from an unknown probability distribution.
It is natural to expect (and can be easily formalized) that, if the underlying distribution of the data is inherently ``complex'',  it may be hard to even approximately reconstruct the distribution.  
But what if the underlying distribution is ``simple'' or ``succinct'' -- can we then reconstruct the distribution to high accuracy in a 
computationally and sample-efficient way? In this paper we answer this
question in the affirmative for the problem of learning ``noisy'' {\em histograms}, arguably one of the most basic density estimation problems in the literature. 

To motivate our results, we begin by briefly recalling the role of histograms in density estimation.
Histograms constitute ``the oldest and most widely used method for density estimation''~\cite{Silverman:86},
first introduced by Karl Pearson in~\cite{Pearson}. Given a sample from a probability density function (pdf) $p$, the method partitions the domain into a number of 
intervals (bins) $B_1, \ldots, B_k$, and outputs the ``empirical'' pdf which is constant within each bin.  A
{\em $k$-histogram} is a piecewise constant distribution over bins $B_1, \ldots, B_k$, where the probability mass of each interval
$B_j$, $j \in [k]$, equals the fraction of observations in the interval. Thus, the goal of the ``histogram method'' 
is to approximate an unknown pdf $p$ by an appropriate $k$-histogram. 
It should be emphasized that the number $k$ of bins to be used and the
``width'' and location of each bin are unspecified; they are parameters of the estimation problem and are typically selected in an \emph{ad hoc} manner.
\ignore{
depending on the intricacies of the specific scenario.} 

We study the following distribution learning question:

\begin{quote}
{\em Suppose that there {\em exists} a $k$-histogram that provides an accurate approximation to the unknown target distribution.
Can we efficiently {\em find} such an approximation?}

\end{quote}

In this paper, we provide a fairly complete affirmative answer to this basic question.   Given a bound $k$ on the number of intervals, we 
give an algorithm that uses a near-optimal sample size, runs in {\em near-linear time} (in its sample size), and approximates the target distribution nearly as accurately as the best $k$-histogram.

To formally state our main result, we will need a few definitions.
We work in a standard model of learning an unknown probability distribution from samples, essentially that of~\cite{KMR+:94short}, 
which is a natural analogue of Valiant's well-known PAC model for learning Boolean functions~\cite{val84} 
to the unsupervised setting of learning an unknown probability distribution.\footnote{We remark that our model 
is essentially equivalent to the ``minimax rate of convergence under the $L_1$ distance'' in statistics~\cite{DL:01}, 
and our results carry over to this setting as well.}
A distribution learning problem is defined by a class ${\cal C}$ of distributions over a domain $\Omega$.
The algorithm has access to independent draws from an unknown pdf $p$, 
and its goal is to output a hypothesis distribution $h$ that is ``close'' 
to the target distribution $p$. We measure the closeness between distributions using the {\em statistical distance} or total variation distance.
In the ``noiseless'' setting, we are promised that $p \in {\cal C}$ and the goal is to construct a 
hypothesis $h$ such that (with high probability) the total variation distance
$ \dtv(h, p)$ between $h$ and $p$ is at most $\eps$, where $\eps>0$ is the accuracy parameter. 

The more challenging ``noisy'' or {\em agnostic}
model captures the situation of having arbitrary (or even adversarial) noise in the data. In this setting, 
we do not make any assumptions about the target density $p$ and the goal is to find a 
hypothesis $h$ that is almost as accurate as the ``best'' approximation of $p$ by any distribution in ${\cal C}$.
Formally, given sample access to a (potentially arbitrary) target distribution $p$ and $\eps>0$, the goal of an \emph{agnostic learning 
algorithm for ${\cal C}$} is to compute a hypothesis distribution $h$ such that $\dtv(h, p) \leq \alpha \cdot \opt_{{\cal C}}(p)+\eps$, 
where  $\opt_{{\cal C}}(p) := \inf_{q \in {\cal C}} \dtv(q, p)$ -- i.e.,  $\opt_{{\cal C}}(p)$ is the statistical distance between $p$ and the closest
distribution to it in ${\cal C}$ -- and $\alpha \ge 1$ is a constant (that may depend on the class ${\cal C}$)\ignore{ and the accuracy $\eps$)}.  
We will call such a learning algorithm an {\em $\alpha$-agnostic learning algorithm for ${\cal C}$};
when $\alpha>1$ we sometimes refer to this as a \emph{semi-agnostic learning algorithm}.

A distribution $f$ over a finite interval $I \subseteq \R$ is called {\em $k$-flat} if there exists a partition of $I$ into $k$ intervals
$I_1, \ldots, I_k$ such that the pdf $f$ is constant within each such interval. We henceforth (without loss of generality for densities with bounded support) restrict ourselves to the case
$I = [0,1)$.
Let ${\cal C}_k$ be the class of all \emph{$k$-flat} distributions over $[0,1)$.  
For a (potentially arbitrary) distribution $p$ over $[0,1)$ we will denote by $\opt_k(p): = \inf_{f \in {\cal C}_k}\dtv(f, p)$.

In this terminology, our learning problem is exactly the problem of agnostically learning the class of $k$-flat distributions.
Our main positive result is a near-optimal algorithm for this problem, i.e., a semi-agnostic learning algorithm that has near-optimal sample size 
and near-linear running time.  More precisely, we prove the following:

\begin{theorem} [Main] \label{thm:main}
There is an algorithm $A$ with the following property:  Given $k \geq 1$,
$\eps > 0$, and sample access to a target distribution $p$,  algorithm $A$ 
uses $\tilde{O}(k/\eps^2)$ independent draws from $p$, runs in time 
$\tilde{O}(k/\eps^2)$, and outputs a $O(k \log^2 (1 /\eps))$-flat hypothesis distribution $h$
that satisfies $\dtv(h, p) \leq O(\opt_k(p)) + \eps$ with probability at least $9/10$.
\end{theorem}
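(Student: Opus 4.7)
The plan is to combine a VC-type sampling bound with a near-linear time fitting procedure that produces a piecewise-constant approximation to the empirical distribution. The central tool will be the \emph{$\mathcal{A}_k$-distance} $\|q - q'\|_{\mathcal{A}_k} := \sup_{S} |q(S) - q'(S)|$ where $S$ ranges over unions of at most $k$ intervals in $[0,1)$. Two structural facts underpin everything: (i) if $q$ is $k$-flat and $q'$ is $k'$-flat then $\dtv(q,q') = \|q-q'\|_{\mathcal{A}_{k+k'}}$; (ii) since unions of $k$ intervals have VC dimension $O(k)$, the VC inequality gives $\|p-\hat p\|_{\mathcal{A}_k} \le O(\sqrt{k/m}) + O(\sqrt{\log(1/\delta)/m})$, so $m = \tilde O(k/\eps^2)$ samples suffice to drive this below $\eps$ with probability at least $1-\delta$.

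Second, I would design the algorithmic core: given the empirical distribution $\hat p$ produced from $m = \tilde O(k/\eps^2)$ samples (represented compactly by sample counts in $O(m)$ time), output an $O(k \log^2(1/\eps))$-flat distribution $h$ such that $\|h - \hat p\|_{\mathcal{A}_{k'}} \le O(\opt_k(p)) + \eps/2$ for the relevant $k' = O(k\log^2(1/\eps))$. I expect the algorithm to begin with a very coarse initial partition into $O(k/\eps)$ ``equal empirical mass'' intervals (trivial to construct in linear time from the sorted samples), then iteratively refine by splitting any interval on which the fit is detectably poor, where ``detectably poor'' is measured against a certificate of at most $O(k)$ sub-intervals exhibiting large discrepancy. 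An amortized argument should bound the total number of refinements by $O(k\log^2(1/\eps))$: each split is charged either to an interval of $\hat p$-mass that shrinks geometrically (giving one $\log(1/\eps)$ factor) or to a reduction in the ``discrepancy scale'' being targeted (giving another $\log(1/\eps)$ factor). Using a dyadic / balanced-tree data structure on the samples lets each refinement step run in polylogarithmic time, yielding the overall $\tilde O(k/\eps^2)$ runtime.

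Third, I would assemble the final approximation bound via triangle inequalities. Let $p^\ast \in \mathcal{C}_k$ attain $\dtv(p^\ast,p)=\opt_k(p)$. Then, writing $K := k + O(k\log^2(1/\eps))$ for the total piece count,
\begin{align*}
\dtv(h,p) \;\le\; \dtv(p,p^\ast) + \dtv(p^\ast, h)
&\;=\; \opt_k(p) + \|p^\ast - h\|_{\mathcal{A}_K} \\
&\;\le\; \opt_k(p) + \|p^\ast - \hat p\|_{\mathcal{A}_K} + \|\hat p - h\|_{\mathcal{A}_K}.
\end{align*}
The first $\mathcal{A}_K$-term is bounded using VC (note that $K = \tilde O(k)$, so the sample complexity still matches) together with $\|p^\ast - \hat p\|_{\mathcal{A}_K} \le \dtv(p^\ast,p) + \|p-\hat p\|_{\mathcal{A}_K} \le \opt_k(p) + \eps/4$. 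The second is bounded by the guarantee of the algorithmic step. Summing yields $\dtv(h,p) \le O(\opt_k(p)) + \eps$ with the required probability.

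The main obstacle is the algorithmic step: a naive dynamic program for the best $k$-flat fit to $\hat p$ costs $\Omega(m^2 k)$, so the challenge is to match the quality of the best $k$-flat fit up to a constant factor while using only $O(k \cdot \mathrm{polylog}(1/\eps))$ pieces \emph{and} near-linear time. The delicate point is proving that the greedy/iterative refinement never ``overspends'' on pieces compared to the optimum, which I expect to require a charging argument in which each refinement step is witnessed by an $\mathcal{A}_k$-violation that also forces $p^\ast$ to behave a certain way on the same region, translating the violation into a provable reduction in a potential function that is initially $O(k\log^2(1/\eps))$.
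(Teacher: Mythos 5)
Your high-level skeleton (equal-mass grid $\to$ iterative refinement with a charging argument $\to$ VC inequality to transfer from $\hat p$ to $p$) is the right shape, and your $\mathcal{A}_k$ framework and final triangle-inequality assembly match the paper. But two genuine gaps prevent the proof from going through as written.

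First, the direction of the iterative process is backwards. You start with $O(k/\eps)$ equal-empirical-mass intervals and then propose to \emph{split} intervals where the fit is poor. Splitting can only increase the piece count above $O(k/\eps)$, yet the hypothesis must have $O(k\log^2(1/\eps))$ pieces — far fewer than $O(k/\eps)$. The piece count matters crucially: your generalization step bounds $\|p-\hat p\|_{\mathcal{A}_K}$ with $K$ equal to the final piece count plus $k$, so if $K=\Theta(k/\eps)$ you would need $\tilde{O}(k/\eps^3)$ samples, breaking the theorem. The paper instead \emph{merges} from the $O(k/\eps)$-interval grid: over $\log(1/\eps)$ rounds it pairs up and merges adjacent intervals unless a local discrepancy test (the $\alpha_{\hat p_m}$ quantity, comparing the flattening of $I\cup I'$ to the flattening of $I$ and $I'$ separately) flags them as ``frozen.'' Your charging sketch can be salvaged in the merge direction, but as stated the split direction cannot produce the required piece count.

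Second, and more fundamentally, your charging argument silently requires $\opt_k(p)\lesssim\eps$, and nothing in your proposal removes that assumption. The bound on the number of frozen/refined intervals that are not witnessed by a true breakpoint of the optimal $k$-flat $p^\ast$ scales as $O\bigl(\opt_k(p)\cdot k/\eps\bigr)$: a local $\Omega(\eps/k)$ discrepancy on an interval with no $p^\ast$-breakpoint forces $\Omega(\eps/k)$ of $L_1$ error between $p$ and $p^\ast$ there, and these contributions sum to at most $\opt_k(p)$. So your potential is not ``initially $O(k\log^2(1/\eps))$''; it is $O(k + \opt_k(p) k/\eps)$ per round, which is $O(k\log(1/\eps))$ only when $\opt_k(p)=O(\eps)$. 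If $\opt_k(p)=\Theta(1)$, the algorithm freezes $\Theta(k/\eps)$ intervals and the piece count (hence sample complexity) blows up. The paper resolves this with a separate, essential reduction (Theorem~\ref{thm:reduction}): run the core algorithm at $O(\log(1/\eps))$ geometrically spaced error scales $g_i$, guaranteeing one scale satisfies $\opt_k(p)\le g_i \lesssim \opt_k(p)$, then select among the candidate hypotheses with a Scheff\'e tournament using an evaluation oracle and $\tilde O(1/\eps^2)$ fresh draws. Without this stage (or an equivalent device), the $O(\opt_k(p))+\eps$ guarantee with $O(k\log^2(1/\eps))$ pieces is not achievable by your algorithmic core. (You also omit the preprocessing that removes heavy atoms so the equal-mass partition exists, though that is a minor technical point by comparison.)
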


Using standard techniques, the confidence probability can be boosted to $1-\delta$, for any $\delta>0$, with a (necessary)
overhead of $O(\log(1/\delta))$ in the sample size and the running time. 

We emphasize that the difficulty
of our result lies in the fact that the ``optimal'' piecewise constant decomposition of the domain is both {\em unknown} and {\em approximate}
(in the sense that $\opt_k(p)>0$); and that our algorithm is both sample-optimal and runs in (near-) {\em linear time}.  Even in the
(significantly easier) case that the target $p \in {\cal C}_k$ (i.e., $\opt_k(p) = 0)$, 
and the optimal partition is explicitly given to the algorithm, it is known that a sample of size $\Omega(k/\eps^2)$ is
 information-theoretically necessary.
(This lower bound can, e.g., be deduced from the standard fact that  learning an unknown discrete distribution over a
$k$-element set to statistical distance $\eps$ requires an $\Omega(k/\eps^2)$ size sample.)
Hence, our algorithm has provably optimal sample complexity (up to a logarithmic factor), runs in essentially sample linear time, and
is $\alpha$-agnostic for a universal constant $\alpha >1$.

It should be noted that the sample size required for our problem is well-understood; it follows from the VC theorem (Theorem~\ref{thm:vc-inequality})
that $O(k/\eps^2)$ draws from $p$ are information-theoretically sufficient. However, the theorem is non-constructive, and the
``obvious'' algorithm following from it has running time 
exponential in $k$ and $1/\eps$. In recent work, Chan {\em et al}~\cite{CDSS14a} presented an approach employing an intricate combination of dynamic programming and linear programming which yields a $\poly(k/\eps)$ time algorithm for the above problem.  However, the running time of the \cite{CDSS14a} algorithm is $\Omega(k^3)$ even for constant values of $\eps$, making it impractical for applications.
As discussed below our algorithmic approach is significantly different from that of ~\cite{CDSS14a}, using neither dynamic nor linear
programming.

%\smallskip

\noindent {\bf Applications.} 
Nonparametric density estimation for shape restricted classes has been a subject of study in
statistics since the 1950's (see \cite{BBBB:72} for an early book on the topic and~\cite{Grenander:56, Brunk:58, PrakasaRao:69, Wegman:70, HansonP:76,
Groeneboom:85, Birge:87} for some of the early literature),
and has applications to a range of areas including reliability theory {(see~\cite{Reb05aos} and references therein)}.
By using the structural approximation results of Chan {\em et al}~\cite{CDSS13}, as an immediate corollary of Theorem~\ref{thm:main} we obtain sample optimal and {\em near-linear time} estimators for various
well-studied classes of shape restricted densities including monotone, unimodal, and multimodal densities (with unknown mode locations),
monotone hazard rate (MHR) distributions, and others (because of space constraints we do not enumerate the
exact descriptions of these classes or statements of these results here, but instead refer the interested reader to \cite{CDSS13}).
Birg{\'e}~\cite{Birge:87} obtained a sample optimal and linear time estimator 
for monotone densities, but prior to our work, no linear time and sample optimal estimator was known for any of the other classes.

%\red{Motivate agnostic / semi-agnostic learning of $k$-flat distributions:
%histograms are super-natural and widely used, etc., but it's unrealistic
%for an algorithm to assume that the target distribution is exactly a 
%$k$-histogram.  Agnostic / semi-agnostic learning of $k$-flat 
%distributions is good:  can apply such an algorithm to \emph{any} distribution,
%and if the distribution is reasonably well approximated by a $k$-histogram,
%will get good results.}

Our algorithm from Theorem~\ref{thm:main} is $\alpha$-agnostic for a constant $\alpha>1$.
It is natural to ask whether a significantly stronger accuracy guarantee is efficiently achievable;
in particular, is there an agnostic algorithm with similar running time and sample complexity and $\alpha = 1$?  
Perhaps surprisingly, we provide a negative answer to this question. Even in the simplest nontrivial case that $k=2$, 
and the target distribution is defined over a discrete domain $[N]=\{1,\dots,N\}$, any $\alpha$-agnostic algorithm
with $\alpha<2$ requires large sample size:

\begin{theorem}[Lower bound, Informal statement] \label{thm:main2-informal} 
Any $1.99$-agnostic learning algorithm for $2$-flat distributions 
over $[N]$ requires a sample of size $\Omega(\sqrt{N})$.
\end{theorem}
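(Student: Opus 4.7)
My plan is to apply Yao's minimax principle and reduce the lower bound to a two-hypothesis distinguishing argument. Concretely, I would exhibit a pair of target distributions $p_0, p_1$ on $[N]$ with three properties: (i) each $p_b$ admits a $2$-flat approximation $g_b$ with $\opt_2(p_b) \leq \delta$ for a small parameter $\delta$, and the two 2-flats $g_0, g_1$ are structurally different; (ii) $\dtv(p_0,p_1) > 3.98\,\delta + 2\eps$, the threshold ruling out any single hypothesis that is simultaneously $1.99$-agnostically close to both $p_0$ and $p_1$; and (iii) the $t$-fold product distributions $p_0^{\otimes t}$ and $p_1^{\otimes t}$ are, say, at total variation distance at most $1/5$ whenever $t = o(\sqrt{N})$, via an Ingster--Paninski-type $\chi^2$ calculation whose critical sample size is exactly $t = \Theta(\sqrt{N})$.

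Given (i)--(iii), the contradiction is a clean coupling argument. Consider any purported $1.99$-agnostic algorithm $A$ using $t = o(\sqrt N)$ samples and succeeding with probability $\geq 4/5$ on both targets; by (iii) I may couple the $t$-sample draws from $p_0$ and $p_1$ so that they agree with probability at least $4/5$. A union bound then places positive probability on the joint event that the samples agree (and hence $A$'s deterministic output $h$ is the same under both targets) and that $A$ succeeds under both. On this event, $\dtv(h, p_b) \leq 1.99\,\delta + \eps$ for both $b\in\{0,1\}$, and the triangle inequality gives $\dtv(p_0, p_1) \leq 3.98\,\delta + 2\eps$, directly contradicting (ii). Yao's principle then promotes this to a lower bound against randomized algorithms.

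The main obstacle is the construction itself, because (i) and (iii) are in genuine tension. Paninski-style $\sqrt{N}$-hardness for distinguishing requires both distributions to be pointwise close to $1/N$ so that only collision-based statistics (which need $\Omega(\sqrt{N})$ samples by the birthday paradox) can detect the difference; on the other hand, for (ii) we need each $p_b$ to be close to a 2-flat $g_b$ with $g_0, g_1$ differing enough that (ii) is strict, and two near-uniform 2-flats with different coarse-interval masses are easily distinguishable in $O(1)$ samples from a single interval-count statistic. A natural attempt is to take $p_b = g_b + \zeta_b$, where $g_0, g_1$ are carefully chosen 2-flats matched on many dyadic-interval masses but differing locally near a split point $m$, and $\zeta_b$ is a zero-mean random sign perturbation at scale $\Theta(1/N)$ designed to mask any remaining low-order signal. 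Verifying the $\chi^2$-divergence bound for (iii) while simultaneously controlling $\opt_2(p_b)$ for (i) is the technical heart of the argument, and I expect this to be the most delicate calculation.
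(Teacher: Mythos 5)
Your high-level plan---reduce to a distinguishing problem, extract a contradiction from the triangle inequality applied to the algorithm's output under both hypotheses, and prove $\Omega(\sqrt N)$-sample indistinguishability via a collision / $\chi^2$ argument---matches the architecture of the paper's proof, and the tension you flag between conditions (i) and (iii) is precisely the crux. But the construction satisfying (i)--(iii) is not supplied, and that is the entire content of the lower bound; without it there is a plan but no proof.

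Two remarks on the plan itself. First, the symmetric target in (ii), requiring both $p_0$ and $p_1$ to satisfy $\opt_2(p_b)\leq\delta$ while $\dtv(p_0,p_1) > 3.98\delta + 2\eps$, is substantially harder to hit than necessary. The paper instead fixes $p_0 = \mathcal U_{2N}$ \emph{exactly}, so $\opt_2(p_0)=0$, and the triangle-inequality threshold drops to $\dtv(p_0,p_1) > 1.99\,\opt_2(p_1) + 2\eps$, which needs only a factor-$2$ separation between the distance to uniform and the opt. The paper's family $p_1 = p_{S_1,S_2,t}$---a $\pm\Theta(t/N)$-scale perturbation of $\mathcal U_{2N}$ supported on hidden random size-$tN$ subsets $S_1\subset[N]$, $S_2\subset[N{+}1,2N]$---achieves $\|\mathcal U_{2N}-p_1\|_1 = t$ while $\opt_2(p_1)\leq\frac{t}{2}\left(1+\frac{t}{1-t}\right)$, a ratio of $2(1-t)$; choosing $t$ large enough relative to $\eps$ then closes the gap. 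Second, for the indistinguishability it is essential that $p_1$ range over a random family rather than being a single fixed distribution: against a fixed alternative at constant $L_1$ distance from $\mathcal U_{2N}$, an interval-count statistic already distinguishes with $O(1)$ samples, as you yourself observe. It is the identity $\E_{S_1,S_2}[p_{S_1,S_2,t}]=\mathcal U_{2N}$, combined with the birthday bound showing that $o(\sqrt N)$ samples are all distinct with high probability (after which the conditional law of the sample carries essentially no information about $S_1,S_2$), that yields the $\Omega(\sqrt N)$ bound. Your proposal gestures at both ingredients (the random sign perturbations, the Paninski-style second-moment analysis), so the direction is right; what is missing is the specific family that makes the numbers work, and your symmetric formulation of (ii) pushes you toward a more delicate construction than the asymmetric one the paper actually uses.
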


See Theorem \ref{thm:main2-formal} in Section \ref{sec:lower} for a precise statement.
Note that there is an exact correspondence between distributions over the discrete domain $[N]$ and
pdf's over $[0,1)$ which are piecewise constant on each interval of the
form $\left[ k/N,(k+1)/N \right)$ for $k \in \{0,1,\dots,N-1\}.$  Thus, Theorem \ref{thm:main2-informal} implies
that {\em no finite sample} algorithm can $1.99$-agnostically
learn even $2$-flat distributions over $[0,1)$. 
(See Corollary~\ref{cor:main2-formal} in Section \ref{sec:lower} for a detailed statement.)

%\red{Are there any applications that are worth discussing in the intro 
%here for specific classes -- that now for those classes we get
%linear-time density estimation?  Can't say this for (mixture of) log-concave
%or mixture of Gaussian since those rely on $d>0$.}

%\smallskip

\noindent {\bf Related work.}
A number of  techniques for density estimation have been developed
in the mathematical statistics literature,
including kernels and variants thereof, nearest neighbor estimators,
orthogonal series estimators, maximum likelihood estimators (MLE), and others
(see Chapter 2 of~\cite{Silverman:86} for a survey of existing methods).
The main focus of these methods has been on the statistical rate of convergence,
as  opposed to the running time of the corresponding estimators. We remark that the MLE
does not exist for very simple classes of distributions (e.g., unimodal distributions with an unknown mode, see e.g,~\cite{Birge:97}).
We note that the notion of agnostic learning is related to the literature on model selection and 
oracle inequalities~\cite{MP03}, however this work is of a different flavor and is not technically related to our results.

Histograms have also been studied extensively in various areas of computer science, including databases and streaming~\cite{JPK+98,GKS06,CMN98,GGI+02} under various
assumptions about the input data and the precise objective.
Recently, Indyk {\em et al}~\cite{ILR12} studied the problem of learning a $k$-flat distribution over $[N]$
{\em under the $L_2$ norm} and gave an efficient algorithm with sample complexity $O(k^2 \log (N) / \eps^4)$.
Since the $L_1$ distance is a stronger metric, Theorem~\ref{thm:main} 
implies an improved sample and time bound of $\tilde{O}(k/\eps^2)$ for their setting.

%{\bf UP TO HERE}

\section{Preliminaries} \label{sec:prelims}

Throughout the paper we assume that the underlying distributions  have 
Lebesgue measurable densities.  For a pdf $p: [0,1) \to \R_{+}$ and a Lebesgue measurable
subset $A \subseteq [0,1)$, i.e., $A \in {\cal L}([0,1))$,  we use $p(A)$ to denote $\int_{z \in A} p(z).$
The {\em statistical distance} or {\em total variation distance}
between two densities $p, q: [0,1) \to \R_{+}$ is $\dtv(p, q) : = \sup_{A \in {\cal L}([0,1))} |p(A) - q(A)|.$
The statistical distance satisfies the identity $\dtv(p,q) = {\frac 1 2} \|p - q\|_1$ where $\|p - q\|_1$, the $L_1$ distance between $p$ and $q$, 
is  $\int_{[0,1)} |p(x)-q(x)|dx$; for convenience in the rest of the paper we work with $L_1$ distance.
We refer to a nonnegative function $p$ over an
interval (which need not necessarily integrate to one
over the interval) as a ``sub-distribution.''
Given a value $\kappa> 0$, we say that a (sub-)distribution $p$ 
over $[0,1)$ is \emph{$\kappa$-well-behaved}
if $\sup_{x \in [0,1)} \Pr_{x \sim p}[x] \leq \kappa$, i.e.,
no individual real value is assigned more than $\kappa$ probability under $p$.
Any probability distribution with no atoms is $\kappa$-well-behaved for 
all $\kappa>0$.
Our results apply for general distributions over
$[0,1)$ which may have an atomic part as well as a non-atomic part.
Given $m$ independent draws $s_1,\dots,s_m$ from  a distribution $p$ over $[0,1)$,
the {\em empirical distribution} $\wh{p}_m$ over $[0,1)$
is the discrete distribution supported on $\{s_1,\dots,s_m\}$
defined as follows: for all $z \in [0,1)$,
$\Pr_{x \sim \wh{p}_m}[x=z] = |\{j \in [m] \mid s_j=z\}| / m$.

\smallskip

\noindent {\bf The VC inequality.}
Let $p:[0,1) \to \R$ be a Lebesgue measurable function.
Given a family of subsets $\mathcal A \subseteq {\cal L}([0,1))$ over $[0,1)$,
define $\norm p_{\mathcal A}
= \sup_{A\in \mathcal A} |p(A)|$.
The \emph{VC dimension} of $\mathcal A$ is the maximum size of a subset
$X\subseteq [0,1)$ that is shattered by $\mathcal A$ (a set $X$ is shattered by
$\mathcal A$ if for every $Y \subseteq X$,
some $A\in\mathcal A$ satisfies
$A\cap X = Y$).
If there is a shattered subset of size $s$ for all $s \in \Z_+$, then we say
that the VC dimension of ${\cal A}$ is $\infty$.
{The well-known \emph{Vapnik-Chervonenkis (VC) inequality} states
the following:}

\begin{theorem}[VC inequality, {\cite[p.31]{DL:01}}]
\label{thm:vc-inequality}
Let $p: I \to \R_+$ be a probability density function over $I \subseteq \R$ and
$\widehat{p}_m$ be the empirical distribution obtained after drawing $m$ points
from $p$.
Let $\mathcal A \subseteq 2^{I}$ be a family of subsets with VC dimension $d$.
Then
$ \E[ \norm{p - \widehat{p}_m}_{\mathcal A}] \leq O(\sqrt{d/m}) .$
\end{theorem}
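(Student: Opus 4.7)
The plan is to prove this standard empirical-process bound by the classical symmetrization/randomization argument, combined with the Sauer-Shelah lemma and a chaining step. First I would introduce a \emph{ghost sample} $X'_1,\dots,X'_m$ drawn independently from $p$, with empirical distribution $\widehat p'_m$. Since $\E[\widehat p'_m(A)\mid X_1,\dots,X_m] = p(A)$ for every measurable $A$, Jensen's inequality applied to the convex functional $(u_A)_{A\in\mathcal A}\mapsto\sup_A|u_A|$ yields $\E[\norm{p-\widehat p_m}_{\mathcal A}] \le \E[\norm{\widehat p_m-\widehat p'_m}_{\mathcal A}]$.

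Next I would randomize. For each $i$ the pair $(X_i,X'_i)$ is exchangeable, so inserting independent Rademacher signs $\sigma_i\in\{\pm 1\}$ into the difference $\mathbf{1}\{X_i\in A\}-\mathbf{1}\{X'_i\in A\}$ leaves the joint distribution unchanged. Conditioning on the pooled data $\mathbf X=(X_1,\dots,X_m,X'_1,\dots,X'_m)$, as $A$ ranges over $\mathcal A$ the vector of indicators takes at most as many distinct values as there are traces of $\mathcal A$ on the $2m$ points of $\mathbf X$, which the Sauer-Shelah lemma bounds by $(2em/d)^d$. For each fixed trace, Hoeffding's inequality implies that the Rademacher average is sub-Gaussian with variance proxy $O(1/m)$, so a naive union bound over all traces already yields $O(\sqrt{d\log(m/d)/m})$.

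The main obstacle is removing the spurious $\sqrt{\log m}$ factor produced by that union bound, which is absent from the target $O(\sqrt{d/m})$. To eliminate it I would apply Dudley's chaining argument in the empirical pseudometric $d_{\mathbf X}(A,B)=(2m)^{-1}\sum_i(\mathbf{1}_{X_i\in A\triangle B}+\mathbf{1}_{X'_i\in A\triangle B})$: build a nested sequence of $2^{-k}$-nets $\mathcal N_k\subset\mathcal A$ whose cardinalities are polynomial in $2^k$ (by the covering-number bound that VC classes inherit from Sauer-Shelah), and telescope the Rademacher process along these nets. Summing the resulting Gaussian tail contributions gives an entropy integral of order $\int_0^1\sqrt{d\log(1/\epsilon)}\,d\epsilon=O(\sqrt d)$, and dividing by $\sqrt m$ delivers the claimed $O(\sqrt{d/m})$ bound.
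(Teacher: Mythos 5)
The paper does not prove this theorem; it is cited verbatim as a known result from Devroye and Lugosi's book (\cite[p.31]{DL:01}), so there is no ``paper's own proof'' to compare against. Your sketch is the standard route to this inequality: ghost-sample symmetrization, Rademacher randomization, Sauer--Shelah, and chaining to shave the logarithm, and the overall architecture is correct.

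One step is glossed over in a way that actually matters. Your chaining argument hinges on an $\epsilon$-covering-number bound of the form $N(\epsilon)\le (C/\epsilon)^{O(d)}$ for the empirical $L_1$ (equivalently $L_2^2$) pseudometric on the trace of $\mathcal A$, which you attribute to ``the covering-number bound that VC classes inherit from Sauer--Shelah.'' Sauer--Shelah alone does not give this: it bounds the \emph{total} number of distinct traces on $2m$ points by $(2em/d)^d$, which is $m$-dependent and, fed into a union bound or a crude one-level discretization, gives back exactly the $\sqrt{\log(m/d)}$ factor you are trying to remove. What is needed is Haussler's packing lemma, which says that for any probability measure $\mu$ the $\epsilon$-packing number of a VC class of dimension $d$ in $L_1(\mu)$ is at most $e(d+1)(2e/\epsilon)^d$ --- a bound that is scale-sensitive and independent of $m$. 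This is a genuinely separate (and more delicate) result than Sauer--Shelah; it is the ingredient that makes the entropy integral $\int_0^1\sqrt{d\log(1/\epsilon)}\,d\epsilon$ converge to $O(\sqrt d)$ rather than $O(\sqrt{d\log m})$. If you intend to write out the proof in full you should either cite Haussler's lemma explicitly or substitute an equivalent scale-sensitive covering bound; as written, the chaining step does not follow from what you have stated.
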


\smallskip

\noindent {\bf Partitioning into intervals of approximately equal mass.}
As a basic primitive, given access to a sample drawn 
from a $\kappa$-well-behaved 
target distribution $p$ over $[0,1)$, we will need to partition 
$[0,1)$ into $\Theta(1/\kappa)$ intervals each of which has 
probability $\Theta(\kappa)$ under $p$.  There is a simple algorithm,
based on order statistics, which does this and
has the following performance guarantee (see 
Appendix A.2 of \cite{CDSS14a}):

\begin{lemma} \label{lem:part-approx-unif}
Given $\kappa \in (0,1)$ and access to points drawn 
from a $\kappa/64$-well-behaved
distribution $p$ over $[0,1)$,
the procedure {\tt Approximately-Equal-Partition} draws $O((1/\kappa)
\log(1/\kappa))$
points from $p$, runs in time $\tilde{O}(1/\kappa)$, and
with probability at least $99/100$ outputs a partition of
$[0,1)$ into $\ell=\Theta(1/\kappa)$ intervals such that $p(I_j)
\in [\kappa/2,  3 \kappa]$ for all $1 \leq j \leq \ell.$
\end{lemma}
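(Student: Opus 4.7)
My approach is the natural order-statistic procedure: draw $m = C \kappa^{-1}\log\kappa^{-1}$ i.i.d.\ samples from $p$ for a sufficiently large constant $C$; sort them into $S_{(1)}\le \cdots \le S_{(m)}$ in time $O(m\log m) = \tilde O(\kappa^{-1})$; set $k = \lfloor \kappa m\rfloor$ and $\ell = \lfloor 1/\kappa\rfloor$; and output the intervals $I_j = [b_{j-1}, b_j)$ with $b_0 = 0$, $b_\ell = 1$, and $b_j = S_{(jk)}$ for $1 \le j \le \ell-1$. Both the sample complexity $O((1/\kappa)\log(1/\kappa))$ and the runtime $\tilde O(1/\kappa)$ fall out immediately from this description.

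By construction, each $I_j$ contains essentially exactly $k = \Theta(\kappa m)$ samples, so $\hat p_m(I_j) = k/m = \Theta(\kappa)$ deterministically. The real work is in transferring this concentration from the empirical mass to the true mass $p(I_j)$. For this I would invoke a relative (Bernstein-type) uniform convergence bound for the VC class $\mathcal A$ of intervals (which has VC dimension $2$): with probability at least $99/100$, every interval $A \subseteq [0,1)$ satisfies
\[
|\hat p_m(A) - p(A)| \;\le\; O\!\left(\sqrt{p(A)\,\log m \,/\, m} \;+\; \log m \,/\, m\right).
\]
Plugging in $m = C\kappa^{-1}\log\kappa^{-1}$, the right-hand side is $O(\kappa/\sqrt{C})$ whenever $p(A) = O(\kappa)$, which is at most $\kappa/10$ once $C$ is large enough. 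Consequently, any interval of true mass $\ge 3\kappa$ has empirical mass strictly greater than $\kappa$, and any interval of true mass $\le \kappa/2$ has empirical mass strictly less than $\kappa$. Since each chosen $I_j$ has empirical mass exactly $k/m \approx \kappa$, we are forced to conclude $p(I_j) \in [\kappa/2, 3\kappa]$ for every $j$, which is what the lemma demands.

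An alternative route that avoids citing the stronger relative VC bound is a direct multiplicative Chernoff argument: for any fixed interval of true mass $\mu \le \kappa/2$ or $\mu \ge 3\kappa$, the probability that its empirical count equals $k = \Theta(\kappa m)$ is $\exp(-\Omega(\kappa m)) = \kappa^{\Omega(C)}$; union-bounding over the $O(m^2) = \tilde O(\kappa^{-2})$ intervals whose endpoints lie in the sample (with the standard conditioning on the two putative endpoints and applying Chernoff to the remaining $m-2$ samples) yields the same conclusion.

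\textbf{Main obstacle.} The subtlety I expect to have to handle carefully is the possible presence of atoms in $p$: without care, an order statistic could land on a heavy atom and induce an interval whose true mass greatly exceeds $k/m$. The $\kappa/64$-well-behavedness hypothesis is precisely tailored to rule this out, since no single point carries more than $\kappa/64$ mass; thus an atom can shift $p(I_j)$ by at most $\kappa/64$, which is comfortably inside the slack of the target window $[\kappa/2, 3\kappa]$. Formalizing this cleanly---either by a randomized tiebreaking that couples $p$ to a non-atomic distribution on $[0,1)$, or by absorbing the atom-induced error directly into the uniform-convergence step---is the main technical nuisance, but it does not affect the asymptotics.
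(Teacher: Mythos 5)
Your approach is the same one the paper uses; in this paper Lemma~\ref{lem:part-approx-unif} is stated without proof, citing Appendix A.2 of \cite{CDSS14a}, and the (commented-out) description there is precisely the order-statistics scheme you describe: draw $\Theta(\kappa^{-1}\log\kappa^{-1})$ samples, sort, take evenly spaced order statistics as endpoints. Your route from empirical to true mass — either a relative/Bernstein VC bound for intervals, or a per-interval multiplicative Chernoff bound union-bounded over the $O(m^2)$ sample-determined intervals — is a correct way to carry out the analysis, and you have correctly identified that the $\kappa/64$-well-behavedness hypothesis is there to control ties/atoms at the chosen endpoints.

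There is, however, a concrete bug in your construction of the partition. You set $k=\lfloor\kappa m\rfloor$, $\ell=\lfloor 1/\kappa\rfloor$, and $b_j=S_{(jk)}$. Because the rounding error in $k$ accumulates across $\ell\approx 1/\kappa$ intervals, the last interval $I_\ell=[S_{((\ell-1)k)},1)$ can contain $m-(\ell-1)k+1 = \Theta(1/\kappa)$ samples rather than $\Theta(k)$: writing $\beta=\kappa m-\lfloor\kappa m\rfloor$, one has $m-(\ell-1)k\approx\beta/\kappa+\Theta(k)$, and $\beta/\kappa$ can be of order $1/\kappa$, which dwarfs $k=\Theta(\log(1/\kappa))$ for small $\kappa$. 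Thus $\widehat p_m(I_\ell)$ (and hence $p(I_\ell)$) can be $\gg 3\kappa$, so the claim ``each $I_j$ contains essentially exactly $k$ samples, hence $\widehat p_m(I_j)=\Theta(\kappa)$'' fails for $j=\ell$, and the lemma's conclusion $p(I_j)\in[\kappa/2,3\kappa]$ would not follow for that interval. The fix is easy and standard: choose the endpoints so that rounding error does not accumulate, e.g.\ $b_j=S_{(\lfloor jm/\ell\rfloor)}$, which gives every interval $m/\ell + O(1)$ samples (and $m/\ell=\Theta(\kappa m)\gg 1$, so the $O(1)$ is negligible); the rest of your argument then goes through unchanged.
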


\ignore{
The algorithm essentially just draws a sample of $m=\Theta((1/\kappa)
\log(1/\kappa))$ points from $p$, sorts them, and uses evenly spaced
order statistics from the sample to define the endpoints of the intervals
that comprise the partition.  The simple analysis establishing correctness
of the algorithm is given in 
}

\section{The algorithm and its analysis} \label{sec:algorithm}

In this section we prove our main algorithmic result, Theorem \ref{thm:main}.
Our approach has the following high-level structure:
In Section \ref{sec:mainresult} we give an algorithm
for agnostically learning a target distribution $p$ that is ``nice''
in two senses:
(i) $p$ is well-behaved (i.e., it does not have any heavy atomic elements), 
and (ii) $\opt_k(p)$ is bounded from above by the error parameter $\eps.$
In Section \ref{sec:cleanup} we give a general efficient reduction showing
how the second assumption can be removed, 
and in Section \ref{sec:heavyok} we briefly explain how the first 
assumption can be removed, thus yielding
Theorem \ref{thm:main}. %as stated (for an arbitrary target distribution $p$).

\subsection{The main algorithm}
\label{sec:mainresult}

In this section we give our main algorithmic result, which
handles well-behaved distributions $p$ for which $\opt_k(p)$ is not too large:

\begin{theorem} \label{thm:algorithm}
There is an algorithm {\tt Learn-WB-small-opt-$k$-histogram} that
given as input $\tilde O(k/\eps^2)$ i.i.d. draws from a target distribution $p$
and a parameter $\eps > 0$, runs in time $\tilde{O}(k/\eps^2)$, and has the following performance guarantee:
If (i) $p$ is ${\frac {\eps/\log(1/\eps)}{384k}}$-well-behaved, and
(ii) $\opt_k(p) \leq \eps$, then with probability at least $19/20$, 
it outputs an $O(k \cdot \log^2 (1/\eps))$-flat distribution $h$
such that $\dtv(p,h) \leq 2 \cdot \opt_k(p)+ 3\eps$. 
\end{theorem}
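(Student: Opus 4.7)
The algorithm has three stages. First, apply Lemma~\ref{lem:part-approx-unif} with $\kappa=\Theta(\eps/(k\log(1/\eps)))$ (the well-behavedness hypothesis of the theorem is exactly what makes this lemma applicable) to produce a fine partition $\mathcal I=\{I_1,\ldots,I_\ell\}$ with $\ell=\Theta(k\log(1/\eps)/\eps)$ intervals, each of $p$-mass in $[\kappa/2,3\kappa]$. Second, draw $m=\tilde O(k/\eps^2)$ fresh samples and record the empirical mass $\widehat p_m(I_j)$ for every $j$. Third, make a single left-to-right greedy pass over $\mathcal I$ to coarsen it into $T=O(k\log^2(1/\eps))$ super-intervals $J_1,\ldots,J_T$, and output the histogram $h$ that equals $\widehat p_m(J_t)/|J_t|$ on each $J_t$. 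Stages 1 and 3 run in $\tilde O(k/\eps)$ time; stage 2 dominates at $\tilde O(k/\eps^2)$. A union bound over the $99/100$-event of Lemma~\ref{lem:part-approx-unif} and the VC concentration event below yields success probability at least $19/20$.

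\textbf{The greedy merging rule.} For analysis purposes fix a near-optimal (and unknown) $k$-flat witness $f^\star$ with $\|p-f^\star\|_1\leq 2\opt_k(p)+o(\eps)$. The greedy pass walks through $I_1,I_2,\ldots$ maintaining a current block $B$; it decides whether to append $I_j$ to $B$ or close $B$ and open a new block at $I_j$ by thresholding the empirical density $\widehat p_m(I_j)/|I_j|$ against $\widehat p_m(B)/|B|$. The rule must be tuned so that three properties hold: (a) at most $O(\log(1/\eps))$ block boundaries fall inside any single flat piece of $f^\star$, which (after charging $O(\log(1/\eps))$ for breakpoint-straddling blocks) caps the total number of output pieces at $O(k\log^2(1/\eps))$; (b) every block $J_t$ lying strictly inside a flat piece of $f^\star$ has empirical density very close to the true mean density on $J_t$; (c) each block straddling a breakpoint of $f^\star$ has $p$-mass $O(\eps/k)$. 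A scale-based rule that splits whenever the running empirical density deviates from the block average by a geometrically decreasing threshold can be shown to satisfy all three.

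\textbf{Error analysis.} Let $\bar p$ denote the flattening of $p$ on the final super-partition, i.e., $\bar p$ is constant on each $J_t$ with value $p(J_t)/|J_t|$. Write $\|p-h\|_1\leq\|p-\bar p\|_1+\|\bar p-h\|_1$. For the first term, on any "good" $J_t$ not straddling a breakpoint of $f^\star$, $f^\star$ equals some constant $c$ on $J_t$, and the triangle inequality together with $\int_{J_t}|\bar p-c|=|\int_{J_t}(p-c)|\leq\int_{J_t}|p-c|$ gives $\int_{J_t}|\bar p-p|\leq 2\int_{J_t}|p-f^\star|$; summing over good blocks yields at most $2\|p-f^\star\|_1\leq 4\opt_k(p)+o(\eps)$. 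On each of the at most $k$ bad $J_t$, property (c) gives $\int_{J_t}|\bar p-p|\leq 2p(J_t)=O(\eps/k)$, contributing $O(\eps)$ overall. For the second term, $\|\bar p-h\|_1=\sum_t|p(J_t)-\widehat p_m(J_t)|\leq 2\|p-\widehat p_m\|_{\mathcal A_T}$ (by splitting into the union of $J_t$ where $p>\widehat p_m$ and its complement), and Theorem~\ref{thm:vc-inequality} applied to the family $\mathcal A_T$ of unions of $T$ intervals (VC dimension $2T$) gives $\|p-\widehat p_m\|_{\mathcal A_T}=O(\sqrt{T/m})\leq\eps$ for $m=\tilde O(k/\eps^2)$. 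Combining and dividing by $2$ to convert from $L_1$ to $\dtv$ yields $\dtv(p,h)\leq 2\opt_k(p)+3\eps$.

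\textbf{Main obstacle.} The central difficulty is designing the merging rule so that properties (a), (b), and (c) all hold simultaneously in a single near-linear-time pass. The tension between (a) and (c) -- few boundaries overall, but short (low $p$-mass) bad blocks -- is what forces the dyadic multi-scale structure and produces the extra $\log(1/\eps)$ factor in the piece count. Moreover, the leading constant of $2$ in front of $\opt_k(p)$ is pinned down by the analysis: it forces the use of $\bar p$ (the mean) as the histogram height on each block and leaves essentially no slack in the per-block bound $\int_{J_t}|\bar p-p|\leq 2\int_{J_t}|p-f^\star|$, so every step of the accounting must be carried out without constant-factor waste.
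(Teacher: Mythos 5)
Your high-level skeleton --- run \texttt{Approximately-Equal-Partition} with $\kappa=\Theta(\eps/(k\log(1/\eps)))$, draw $\tilde O(k/\eps^2)$ samples to form $\widehat p_m$, coarsen the fine partition into $O(k\log^2(1/\eps))$ super-intervals, output the flattened empirical distribution --- matches the paper's, as does the final triangle-inequality split into ``flatten $p$'' error plus ``VC / empirical'' error with $\mathcal A_T$ of VC dimension $2T$. But the heart of the proof is exactly what you flag as the ``main obstacle'' and then leave unresolved, and one of the three properties you ask the merging rule to satisfy is not actually achievable.

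\textbf{Property (c) cannot hold as stated.} You require that every output block straddling a breakpoint of $f^\star$ have $p$-mass $O(\eps/k)$. No merging rule that sees only $\widehat p_m$ can enforce this: if $p$ is (nearly) constant across a breakpoint $v$ of $f^\star$ --- which happens precisely when $p$ and $f^\star$ disagree near $v$, i.e., near-breakpoint disagreement is ``paid for'' by $\opt_k(p)$ --- then the empirical density on either side of $v$ looks identical, the $\alpha$-type test never fires, and the block containing $v$ keeps growing. There is no signal in the data to stop it. The paper sidesteps this by \emph{not} requiring the final partition $\calP_s$ to be $\eps'$-good. Instead it constructs an auxiliary refinement $\calR$ of $\calP_s$ (Lemma \ref{lem:distance2}, via \texttt{Refine-partition}) that \emph{is} $\eps'$-good, invokes Lemma \ref{lem:distance1} on $\calR$ to get $\norm{p-(p)^{\calR}}_1\le 2\opt_k(p)+\eps'$, and then bounds $\norm{(p)^{\calP_s}-(p)^{\calR}}_1\le\eps$ by charging each un-done merge to the $\alpha_p$ value of the pair at the time it was merged (which is $\le O(\eps'/k)$ precisely because the $\alpha_{\widehat p_m}$ test did \emph{not} fire, via Lemma \ref{lem:difference}). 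That two-step comparison is the key idea and it is absent from your proposal.

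\textbf{The merging rule itself is not specified.} The paper runs $s=\log_2(1/\eps')$ synchronized rounds; in each round, consecutive \emph{unfrozen} pairs with $\alpha_{\widehat p_m}>\eps'/(2k)$ are frozen, and the remaining unfrozen intervals are merged in disjoint pairs so their number halves per round. This synchronization is what makes the bookkeeping go through: frozen intervals number $k+O(k\log(1/\eps))$ from the $\alpha$-test (charge $k$ to breakpoints and the rest to $\opt_k(p)\le\eps$ via Claim \ref{claim:simple}), plus $O(k\log^2(1/\eps))$ boundary/parity freezes, while unfrozen intervals shrink to $O(k)$ by halving. You propose a ``single left-to-right greedy pass'' with a ``geometrically decreasing threshold,'' which is a genuinely different algorithm, and you give no argument that it yields (a) and (b), nor any accounting of why the piece count lands at $O(k\log^2(1/\eps))$. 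As written the proposal defers the entire algorithmic content to an unspecified rule.

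Two smaller points: fixing $f^\star$ with $\norm{p-f^\star}_1\le 2\opt_k(p)+o(\eps)$ throws away a factor of two for no reason --- the paper notes the infimum is attained, so one can take $\norm{p-q}_1=\opt_k(p)$ exactly; and Lemma \ref{lem:difference}, which controls $|\alpha_{\widehat p_m}-\alpha_p|$ uniformly over all pairs ever tested (a nontrivial uniform-deviation statement tied to the $O(\sqrt{\eps'(b-a)})$ form of Lemma \ref{lem:cdss} and the fact that merged intervals have length $\le 2^{s-1}$), is needed to make the $\alpha$-test reliable and does not appear in your outline.
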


We require some notation and terminology.
Let $r$ be a distribution over $[0,1)$, and let $\calP$ be a set of disjoint
intervals that are contained in $[0, 1)$. 
We say that the \emph{$\calP$-flattening of $r$}, denoted $\flatten{r}{\calP}$,
is the sub-distribution defined as
\[r(v) = \left\{
\begin{array}{l l}
r(I) / |I| & \text{if } v \in I, I \in \calP\\
0 & \text{if $v$ does not belong to any $I \in \calP$}
\end{array} \right.
\]Observe that if
$\calP$ is a partition of $[0,1)$, then (since $r$
is a distribution) $(r)^\calP$ is a distribution.

We say that two intervals $I,I'$ are \emph{consecutive}
if $I=[a,b)$ and $I'=[b,c)$.  Given two
consecutive intervals $I,I'$ contained in $[0, 1)$ and a sub-distribution
$r$, we use $\alpha_r(I,I') $ to denote the $L_1$ distance
between 
$\flatten{r}{\{I, I'\}}$  and $\flatten{r}{\{I \cup I'\}}$,
i.e., $ \alpha_r(I,I') = \int_{I \cup I'}|(r)^{\{I,I'\}}(x)
- (r)^{\{I \cup I'\}}(x)|dx.$
Note here that $\{I \cup I'\}$ is a set that contains one element,
the interval $[a,c)$.

\subsubsection{Intuition for the algorithm}

We begin with a high-level intuitive explanation of the 
{\tt Learn-WB-small-opt-$k$-histogram} algorithm.  
It starts in Step~1 by constructing
a partition of $[0,1)$ into $z=\Theta(k/\eps')$ intervals $I_1,\dots,I_z$
(where $\eps'= \tilde{\Theta}(\eps)$) such that $p$ has weight 
$\Theta(\eps'/k)$ on each subinterval.
In Step~2 the algorithm draws a sample of $\tilde{O}(k/\eps^2)$ points from $p$
and uses them to define an empirical distribution $\widehat{p}_m$.  This
is the only step in which points are drawn from $p$.  For the rest of
this intuitive explanation we pretend that the weight
$\widehat{p}(I)$ that the empirical distribution $\widehat{p}_m$ assigns
to each interval $I$ is actually the same as the true weight $p(I)$
(Lemma \ref{lem:cdss} below shows that this is not too far from the truth).

Before continuing with our explanation of the algorithm, let us 
digress briefly by 
imagining for a moment that the target distribution $p$ actually is a $k$-flat
distribution (i.e., that $\opt_k(p)=0$).  In this case there are at most
$k$ ``breakpoints'', and hence at most $k$ intervals $I_j$ for
which $\alpha_{\widehat{p}_m}(I_j,I_{j+1}) > 0$, so computing the
$\alpha_{\widehat{p}_m}(I_j,I_{j+1})$ values would be an easy
way to identify the true breakpoints (and given these it is not difficult
to construct a high-accuracy hypothesis).

In reality, we may of course have $\opt_k(p)>0$; this means that if we try to 
use the $\alpha_{\widehat{p}_m}(I_j,I_{j+1})$ criterion to identify 
``breakpoints'' of the optimal $k$-flat distribution that is closest to $p$
(call this $k$-flat distribution $q$), we may sometimes be ``fooled''
into thinking that $q$ has a breakpoint in an interval $I_j$ where it
does not (but rather the value 
$\alpha_{\widehat{p}_m}(I_j,I_{j+1})$ is large because of the difference
between $q$ and $p$).  However, recall that by assumption
we have $\opt_k(p) \leq \eps$; this bound can be used to show that there 
cannot be too many intervals $I_j$ for which a large value of 
$\alpha_{\widehat{p}_m}(I_j,I_{j+1})$ suggests a ``spurious breakpoint'' 
(see the proof of Lemma \ref{lem:numintervals}).  This 
is helpful, but in and of itself not enough; since our partition 
$I_1,\dots,I_z$ divides $[0,1)$ into $k/\eps'$ intervals, a naive approach 
based on this would result in a $(k/\eps')$-flat hypothesis distribution,
which in turn would necessitate a sample complexity of 
$\tilde{O}(k/\eps'^3)$, which is unacceptably
high.  Instead, our algorithm performs a careful process of iteratively
merging consecutive intervals for which the 
$\alpha_{\widehat{p}_m}(I_j,I_{j+1})$ criterion indicates that a merge
will not adversely affect the final accuracy by too much.  As a result of this
process we end up with $k \cdot \polylog(1/\eps)$ intervals for the 
final hypothesis, which enables us to output a 
$(k \cdot \polylog(1/\eps'))$-flat final hypothesis using
$\tilde{O}(k/\eps'^2)$ draws from $p$.

In more detail, this iterative merging is carried out by 
the main loop of the algorithm in Step~4.
Going into the $t$-th iteration of the loop, the algorithm
has a partition $\calP_{t-1}$ of $[0,1)$ into disjoint sub-intervals, and a 
set $\calF_{t-1} \subseteq \calP_{t-1}$ (i.e., every interval belonging 
to $\calF_{t-1}$ also belongs to $\calP_{t-1}$).
Initially $\calP_{0}$ contains all the intervals $I_1,\dots,I_z$
and $\calF_0$ is empty.
Intuitively, the intervals in $\calP_{t-1} \setminus \calF_{t-1}$ 
are still being ``processed'';
such an interval may possibly be merged with a consecutive interval
from $\calP_{t-1} \setminus \calF_{t-1}$ 
if doing so would only incur a small ``cost'' (see condition
(iii) of Step 4(b) of the algorithm).\ignore{
%A key invariant that the algorithm maintains is that each interval $I 
%\in \calP_{t}$ is either consecutive to some other interval $I' \in \calP_t$
%on its left or on its right (thus, it is conceivable that $I$ will be merged
%as described above).
}The intervals in $\calF_{t-1}$ have been ``frozen'' and will not be
altered or used subsequently in the algorithm.

\subsubsection{The algorithm}
\begin{framed}
\noindent {\bf Algorithm 
{\tt Learn-WB-small-opt-$k$-histogram}:
}

\smallskip

\noindent {\bf Input:}  parameters $k \geq 1,\eps > 0$;
access to i.i.d. draws from target
distribution $p$  over $[0,1)$

\noindent {\bf Output:}  
If (i) $p$ is ${\frac {\eps/\log(1/\eps)}{384k}}$-well-behaved
and (ii) $\opt_k(p) \leq \eps$,
then with probability at least $99/100$ the output is a 
distribution $q$ such that $\dtv(p,q) \leq 2 \opt_k(p) + 3 \eps.$

\begin{enumerate}

\item Let $\eps' = \eps / \log (1/\eps)$.
Run Algorithm~{\tt Approximately-Equal-Partition} on input
parameter ${\frac {\eps'}{6k}}$ to partition $[0,1)$ into
$z = \Theta(k/\eps')$ intervals $I_1 = [i_0,i_1)$,
$\dots,$ $I_{z}=[i_{z-1},i_z)$, where $i_0=0$ and $i_z=1$,
\ignore{{\tiny{\red{[[[Rocco:  had ``and $z$ is even'' here, but I think we do
not need it]]]}}}} 
such that
with probability at least $99/100$, for each $j \in \{1,\dots,z\}$ 
we have $p([i_{j-1},i_j)) \in [\eps'/12k,\eps'/2k]$ (assuming
$p$ is $\eps'/(384k)$-well-behaved).

\item Draw $m=\tilde{O}(k/\eps'^2)$ points from $p$ 
and let $\widehat{p}_m$ be the resulting empirical distribution.

\item Set $\calP_0 = \{I_1, I_2, \dots I_{z}\}$, and $\calF_0 = \emptyset$.

\item Let 
$s = \log_2 {\frac{1}{\eps '}}$.
Repeat for $t=1,\dots$ until $t=s$:
%\red{\tiny{[[[Rocco:  Can/should this step be ``until either $t=s$ or 
%$|\calP_t| \leq 4 k$''?]]]}}

\begin{enumerate}

\item Initialize $\calP_t$ to $\emptyset$ and $\calF_t$ to  $\calF_{t-1}$.

\item 
Without loss of generality, assume $\calP_{t-1} = \{I_{t-1, 1}, \dots, 
I_{t-1, z_{t-1}} \}$ where interval $I_{t-1,i}$ is to the left 
of $I_{t-1,i+1}$ for all $i$.
Scan left to right across the intervals in $\calP_{t - 1}$ (i.e., iterate over
$i=1,\dots,z_{t-1}-1$).
If intervals  $I_{t-1, i},I_{t-1, i+1}$ are
\ignore{
{\tiny{\red{[[[Rocco: got rid of ``consecutive'' --- since $\calP_{t-1}$ is a 
disjoint partition of $[0,1)$, it will necessarily be the case
tht these two intervals are consecutive]]]}}}
}
(i) both not in $\calF_{t - 1}$, and 
(ii) $\alpha_{\widehat{p}_m}(I_{t-1, i},I_{t-1, i+1})> \eps'/(2k)$,  then add 
both $I_{t-1, i}$ and $I_{t-1, i+1}$ into $\calF_t$. 

\item Initialize $i$ to 1, and repeatedly execute one of the
following four (mutually exclusive and exhaustive) cases until $i > z_{t - 1}$:

[Case 1] $i \leq z_{t - 1} - 1$ and  $I_{t-1, i}
=[a,b), I_{t-1, i+1}=[b,c)$ are consecutive intervals both
not in $\calF_t$. Add the merged interval 
$I_{t-1, i} \cup I_{t-1, i+1} =[a,c)$ into 
$\calP_t$. Set $i \leftarrow i+2$.

[Case 2]   $i \leq z_{t - 1} - 1$ and $I_{t-1, i} \in \calF_t$. Set $i \leftarrow i+1$.

[Case 3]  $i \leq z_{t - 1} - 1$, $I_{t-1, i} \notin \calF_t$ and $I_{t-1, i + 1} \in \calF_t$. Add $I_{t-1, i}$ into $\calF_t$ and set $i \leftarrow i+2$.

[Case 4] 
$i=z_{t-1}$.  Add $I_{t-1,  z_{t - 1}}$ into $\calF_t$ if $I_{t-1,  z_{t - 1}}$ 
is not in $\calF_t$ and set $i \leftarrow i + 1$.
\ignore{
{\tiny{\blue{[[[Rocco:  this was ``$i = z_{t - 1}$.  Insert $I_{t-1,  z_{t - 1} - 1}$ into $\calF_t$ if $I_{t-1,  z_{t - 1} - 1}$ is not in $\calF_t$ and set $i \leftarrow i + 1$.''  I think this case is supposed to capture the possibility
that $i$ is $z_{t-1},$ the very last interval, and in this case
we just put it in $\calF_t$ since it cannot be paired up with a subsequent
consecutive interval.  It seemed weird to me that
in this case it was going back to look again at $z_{t-1}-1.$
Let me know if what I changed it to is correct.}}}
}

\item Set $\calP_t \leftarrow \calP_t \cup \calF_t$.
\end{enumerate}
\item Output the $|\calP_s|$-flat hypothesis distribution $(\widehat p_m)^{\calP_{s}}$.
\end{enumerate}
\end{framed}

\subsubsection{Analysis of the algorithm and proof of Theorem
\ref{thm:algorithm}}

It is straightforward to verify the claimed running time given
Lemma \ref{lem:part-approx-unif}, which bounds the running time of
{\tt Approximately-Equal-Partition}.  Indeed, we note that Step~2,
which simply draws $\tilde{O}(k/\eps'^2)$ points and constructs
the resulting empirical distribution, dominates the overall running time.
\ignore{This is because  
Step~1 (running {\tt Approximately-Equal-Partition}) takes time
$\tilde{O}(k/\eps')$, and it is straightforward to implement
Step~4 to run in time $\tilde{O}(k/\eps')$ as well (note that the outer
loop over $t$ only goes for $\log(1/\eps')$ repetitions,
and that for each $t$ the number of intervals in $\calP_t$ is at most
$O(k/\eps')$).
}  
In the rest of this subsection we prove correctness.

We first observe that with high probability the empirical
distribution $\widehat{p}_m$ defined in Step~2 
gives a high-accuracy estimate of
the true probability of any union of consecutive intervals from
$I_1,\dots,I_z$.  The following lemma from \cite{CDSS14a} follows from the standard multiplicative 
Chernoff bound:
\begin{lemma}[{Lemma 12, \cite{CDSS14a}}]\label{lem:cdss}
With probability $99/100$ over the sample drawn in Step~2, 
for every $0 \leq a < b \leq z$ we have that
$|\widehat{p}_m([i_a, i_b)) -  p([i_a, i_b))| 
\leq \sqrt{\eps' (b - a)}  \cdot \eps'/ (10 k).$
\end{lemma}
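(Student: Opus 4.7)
The claim is a uniform deviation bound, over the $O(z^2) = O(k^2/\eps'^2)$ possible unions of consecutive intervals, for the empirical measure $\widehat p_m$ versus the true measure $p$. The natural approach is a Chernoff/Bernstein tail bound for each fixed pair $(a,b)$, followed by a union bound. The key quantitative input is that Step~1 (via Lemma~\ref{lem:part-approx-unif}) guarantees $p(I_j) \leq \eps'/(2k)$ for each atomic interval, so the mass of any union $[i_a,i_b)$ is at most $(b-a)\cdot \eps'/(2k)$; this is precisely what lets the deviation shrink like $\sqrt{b-a}$ rather than like $b-a$.

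\textbf{Main step.} Fix $0 \leq a < b \leq z$ and write $p_{ab} := p([i_a,i_b))$. Then $m\cdot \widehat p_m([i_a,i_b))$ is a sum of $m$ i.i.d.\ Bernoulli$(p_{ab})$ random variables, so Bernstein's inequality yields
\[
\Pr\bigl[\,|\widehat p_m([i_a,i_b)) - p_{ab}| > t\,\bigr] \;\leq\; 2\exp\!\left(-\frac{m t^2}{2 p_{ab} + (2/3)\,t}\right).
\]
Set $t = \sqrt{\eps'(b-a)}\cdot \eps'/(10k)$. Using $p_{ab} \leq (b-a)\eps'/(2k)$, the variance term dominates (for the regime of interest) and
\[
\frac{m t^2}{2 p_{ab}} \;\geq\; \frac{m\cdot \eps'(b-a)\cdot \eps'^2/(100k^2)}{(b-a)\eps'/k} \;=\; \frac{m\, \eps'^2}{100\, k}.
\]
Since $m = \tilde O(k/\eps'^2)$ hides a factor of $C\log(k/\eps')$ that we are free to choose, $m\eps'^2/k$ can be made an arbitrarily large multiple of $\log(k/\eps')$, giving a failure probability of, say, at most $(\eps'/k)^{10}$ for the pair $(a,b)$.

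\textbf{Union bound.} There are at most $\binom{z+1}{2} = O(k^2/\eps'^2)$ pairs $(a,b)$; union-bounding the above over all such pairs gives an overall failure probability of at most $1/100$, establishing the lemma. The only potential obstacle is the range where $p_{ab}$ is very small (i.e., $b=a+1$ with a very light interval), but there $t$ is correspondingly small and one verifies that the ``$(2/3)\,t$'' linear term in Bernstein's denominator is still dominated by (or at worst comparable to) $p_{ab}$ up to a constant, so the same bound on $mt^2/(2p_{ab}+(2/3)t)$ goes through up to absorbing a constant into the choice of $C$. Thus the desired uniform deviation bound holds with probability at least $99/100$.
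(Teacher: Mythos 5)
Your proof is correct and matches the approach the paper indicates: the paper does not reprove the lemma but cites it to \cite{CDSS14a} with the remark that it ``follows from the standard multiplicative Chernoff bound,'' and your Bernstein-plus-union-bound argument is the same variance-sensitive concentration argument in mildly different clothing. The key points you correctly identify are that Step~1 gives $p([i_a,i_b)) \in \left[(b-a)\eps'/(12k),\ (b-a)\eps'/(2k)\right]$, which makes the variance scale like $(b-a)\eps'/k$ and thus the deviation like $\sqrt{b-a}$; that the $\tilde O(\cdot)$ in the sample size $m$ hides a logarithmic factor that can be tuned to absorb the union bound over $O(z^2)$ pairs; and that the lower bound $p([i_a,i_b)) \gtrsim (b-a)\eps'/k$ ensures the variance term in Bernstein's denominator dominates the linear term, so the bound has the right form throughout the relevant parameter range.
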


We henceforth assume that this $99/100$-likely event indeed takes place,
so the above inequality holds for all $0 \leq a< b \leq z.$
We use this to show that the 
$\alpha_{\widehat{p}_m}(I_{t-1,i}, I_{t-1,i+1})$ value that the algorithm
uses in Step 4(b) is a good proxy for the actual value 
$\alpha_{p}(I_{t-1,i}, I_{t-1,i+1})$ (which of course is not accessible
to the algorithm):

\begin{lemma}\label{lem:difference}
Fix $1 \leq t \leq s.$  Then we have
$
|\alpha_{\hat{p}_m}(I_{t-1,i},I_{t-1,i+1}) -
 \alpha_{p}(I_{t-1,i},I_{t-1,i+1})| \leq 2\eps'/(5k).$
% OLD VERSION:
%If intervals $I_{t-1, i}$ and $I_{t-1, i+1}$ are added to $\calF_t$ in Step
%4(b), then
%$
%\frac{1}{5} \cdot \frac{\eps'}{k}
%\leq
%\alpha_p(I_{t-1, i}, I_{t-1, i+1})  \leq \frac{4}{5} \cdot \frac{\eps'}{k}.
%$
\end{lemma}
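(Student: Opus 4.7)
The plan is to first derive an explicit formula for $\alpha_r(I,I')$ on consecutive intervals $I=[a,b)$, $I'=[b,c)$. Both $(r)^{\{I,I'\}}$ and $(r)^{\{I\cup I'\}}$ are piecewise constant on $I\cup I'$ with easy-to-write values, so a direct integration gives
\[
\alpha_r(I,I') = \frac{2\,\bigl|\,r(I)\,|I'| - r(I')\,|I|\,\bigr|}{|I|+|I'|}.
\]
Applying this with $r=\widehat{p}_m$ and $r=p$, subtracting, and using the reverse triangle inequality together with $|I|,|I'|\le|I|+|I'|$ then yields the clean intermediate bound
\[
\bigl|\alpha_{\widehat{p}_m}(I,I') - \alpha_p(I,I')\bigr| \leq 2\bigl(|\widehat{p}_m(I)-p(I)| + |\widehat{p}_m(I')-p(I')|\bigr).
\]

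The next step is to bound each error term via Lemma~\ref{lem:cdss}, for which I need control on the number of original sub-intervals $I_1,\dots,I_z$ contained in each of $I_{t-1,i}$ and $I_{t-1,i+1}$. I would show by induction on $t$ that every interval in $\calP_{t-1}$ is a union of at most $2^{t-1}$ of the $I_j$: the only place in Step~4 that increases an interval's size is the merge in Case~1 of Step~4(c), which joins exactly two consecutive intervals into one, while Cases~2--4 (and the freezing in Step~4(b)) only move intervals into $\calF_t$ unchanged. Since the outer loop runs for $t\le s=\log_2(1/\eps')$, this gives a worst case of $2^{s-1}=1/(2\eps')$ original pieces per interval.

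Combining everything: letting $n_i,n_{i+1}$ denote the respective counts of original sub-intervals, Lemma~\ref{lem:cdss} gives $|\widehat{p}_m(I_{t-1,j})-p(I_{t-1,j})|\le \sqrt{\eps' n_j}\cdot \eps'/(10k)$ for $j\in\{i,i+1\}$, and the structural bound yields $\sqrt{n_i}+\sqrt{n_{i+1}}\le 2\sqrt{2^{t-1}}\le \sqrt{2/\eps'}$. Plugging back into the intermediate bound produces $|\alpha_{\widehat{p}_m}-\alpha_p| \le 2\sqrt{2}\,\eps'/(10k) = \sqrt{2}\,\eps'/(5k) \le 2\eps'/(5k)$, as required. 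The only genuinely delicate piece is the $2^{t-1}$ structural bound, since one must verify that none of the freezing cases inadvertently grow intervals beyond the single doubling produced by Case~1 per pass; the remainder is algebra and one invocation of Lemma~\ref{lem:cdss}.
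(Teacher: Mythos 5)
Your proposal is correct and follows essentially the same route as the paper: derive the closed form $\alpha_r(I,I') = \tfrac{2}{|I|+|I'|}\big|r(I)|I'|-r(I')|I|\big|$, bound the number of original intervals inside each $I_{t-1,j}$ by $2^{t-1}\le 2^{s-1}=1/(2\eps')$, invoke Lemma~\ref{lem:cdss}, and combine with the triangle inequality. The only differences are cosmetic: you drop the weights $|I|/(|I|+|I'|)$, $|I'|/(|I|+|I'|)$ (which the paper keeps, so its implicit constant is a factor $2$ smaller), and you state the $2^{t-1}$ structural fact as an explicit induction over all of $\calP_{t-1}$ rather than only over $\calP_{t-1}\setminus\calF_{t-1}$, which is if anything slightly more careful than the paper's phrasing; neither affects the final $2\eps'/(5k)$ bound.
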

\begin{proof}
Observe that in iteration $t$, two consecutive intervals 
$I_{t-1,i}$ and $I_{t-1,i+1}$ correspond to two unions of consecutive
intervals $I_a \cup \cdots \cup I_b$ and $I_{b+1} \cup \cdots \cup I_c$
respectively from the original partition $\calP_0$.  Moreover, since each
interval in $\calP_{t-1} \setminus \calF_{t-1}$, $t>1$, is formed by merging two
consecutive intervals from $\calP_{t-2} \setminus \calF_{t-2}$, 
it must be the case that $b-a+1,c-b+1 \leq
2^{t-1} < 2^{s-1} \leq 1/(2\eps')$.  Hence,
by Lemma \ref{lem:cdss}, we have 
\begin{equation*}
%\begin{split}
%| p(I_{t-1, i}) -  p(I_{t-1, i+1})| \geq  |\hat p(I_{t-1, i}) - \hat p(I_{t-1, i+1})| - |p(I_{t-1, i}) -  \hat p(I_{t-1, i})|  - |p(I_{t-1, i + 1}) -  \hat p(I_{t-1, i + 1})| \geq  \frac{\eps'}{k} - O(\sqrt{\eps' \eps'}  \cdot \frac{\eps'}{k}) = \Omega(\frac{\eps'}{k})
%\end{split}
|p(I_{t-1, i} ) - \widehat{p}_m(I_{t-1, i}))| \leq \sqrt{\eps'  \cdot 2^{s-1}}
\cdot \frac{\eps'}{10 k} \leq \frac{\eps'}{10\sqrt{2}k}
\end{equation*}
and similarly,
\begin{equation*}
|p(I_{t-1, i+1}) - \widehat{p}_m(I_{t-1, i + 1}))| \leq \frac{\eps'}{10\sqrt{2}k}.
\end{equation*}

%It follows that
%
%\begin{equation*}
%|p(I_{t-1, i} \cup I_{t-1, i+1})  - \hat p(I_{t-1, i} \cup I_{t-1, i + 1}))| \leq \frac{\eps'}{10k}.
%\end{equation*}
%Since $\alpha_{\widehat{p}_m}(I_{t-1, i},I_{t-1, i+1})> \eps'/(2k)$,

To simplify notation, let $I = I_{t-1,i}$ and $J = I_{t-1,i+1}$.
By definition of $\alpha$,
\begin{eqnarray} \label{eq:alpha}
\alpha_p(I,J) &=& \left\lvert\frac{p(I)}{|I|} -
\frac{p(I)+p(J)}{|I|+|J|}\right\rvert |I| + \left\lvert\frac{p(J)}{|J|} -
\frac{p(I)+p(J)}{|I|+|J|}\right\rvert |J| \nonumber \\
&=&\frac{2}{|I|+|J|} \big\lvert p(I)|J| - p(J)|I| \big\rvert .
\end{eqnarray}
A straightforward calculation now gives that 
\begin{eqnarray*}
 |\alpha_p(I, J) - \alpha_{\hat p_m}(I, J)|
 &=& \frac2{|I|+|J|} \Big\lvert \big\lvert p(I)|J| - p(J)|I|\big\rvert -
 \big\lvert \hat p_m(I)|J| - \hat p_m(J)|I| \big\rvert \Big\rvert \\
 &\leq& \frac2{|I|+|J|} \Big( \big\lvert p(I) - \hat p_m(I) \big\rvert |J| +
 \big\lvert p(J) - \hat p_m(J) \big\rvert |I| \Big) \\
&\leq& 2\eps'/(5k).
\end{eqnarray*}
\end{proof}

For the rest of the analysis,
let $q$ denote a fixed $k$-flat distribution that is closest to $p$, so
$\|p-q\|_1 = \opt_k(p)$.  (We note that while
$\opt_k(p)$ is defined as $\inf_{q \in {\cal C}} \|p-q\|_1$, 
standard closure arguments can be used to show that the infimum is
actually achieved by some $k$-flat distribution $q$.)
Let $\calQ$ be the partition of $[0,1)$ corresponding to the intervals
on which $q$ is piecewise constant.
We say that a \emph{breakpoint} of $\calQ$ is a value in $[0,1]$
that is an endpoint of one of the (at most) $k$ intervals in $\calQ$.

The following important lemma bounds the number of intervals in the
final partition $\calP_s$:
\begin{lemma}
\label{lem:numintervals}
$\calP_s$ contains at most $O(k \log^2 (1/\eps))$ intervals.
\end{lemma}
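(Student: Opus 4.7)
The plan is to decompose $\calP_s$ into its alive and frozen parts, $\calP_s \setminus \calF_s$ and $\calF_s$, and bound each piece separately. For the alive part, I would observe that each interval in $\calP_t \setminus \calF_t$ arises only from a Case~1 merge of two alive intervals of $\calP_{t-1}$, so inductively every alive interval at iteration~$t$ is a union of exactly $2^t$ of the original intervals $I_1, \ldots, I_z$ from Step~1. Since $z = \Theta(k/\eps')$ and $2^s = 1/\eps'$, this yields $|\calP_s \setminus \calF_s| \leq z/2^s = O(k)$.

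For $|\calF_s|$ I would separately bound the intervals frozen by Step~4(b) and those frozen by Cases~3 and~4 of Step~4(c). Let $B$ denote the total number of intervals frozen by Step~4(b) over all $s$ iterations. Because a frozen interval stays frozen forever, the Step~4(b) pairs across iterations essentially cover disjoint subcollections of the original intervals $I_1, \ldots, I_z$: any given $I_j$ participates in Step~4(b) pairs only at the unique iteration at which it is first frozen, and in at most two such pairs there. For any Step~4(b) pair $(I,J)$, Lemma~\ref{lem:difference} upgrades $\alpha_{\widehat p_m}(I,J) > \eps'/(2k)$ into $\alpha_p(I,J) > \eps'/(10k)$. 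If $I \cup J$ contains no breakpoint of~$\calQ$, then $q$ is constant on $I \cup J$, and combining the triangle inequality with the fact that flattening contracts the $L_1$ norm gives $\alpha_p(I,J) \leq 2\int_{I \cup J} |p-q|$, hence $\int_{I \cup J}|p-q| > \eps'/(20k)$. Summing this bound over all breakpoint-free Step~4(b) pairs, using the near-disjointness and $\|p-q\|_1 = 2\,\opt_k(p) \leq 2\eps$, bounds their number by $O(k\eps/\eps') = O(k\log(1/\eps))$; at most $O(k)$ further pairs can contain one of the $\leq k$ breakpoints of $\calQ$. Therefore $B = O(k\log(1/\eps))$.

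The Cases~3 and~4 analysis rests on a structural observation: at each iteration~$t$, every interval newly frozen by one of these two cases sits at the right end of some maximal alive \emph{run} in $\calP_{t-1}$ just after Step~4(b). Hence the count at iteration~$t$ is at most the number $R_t$ of such runs, which in turn is at most one plus the number of frozen \emph{blocks} (maximal runs of consecutive frozen intervals) present at that moment. Frozen blocks are only created when Step~4(b) adds a new pair (and at most once per iteration by Case~4 at the right boundary); Cases~3 and~4 otherwise only extend or merge existing blocks. Thus the block count after iteration~$t$ is bounded by $B+s$, so $R_t \leq B+s+1$, and summing over $t = 1, \ldots, s$ yields a total of $s(B+s+1) = O(\log(1/\eps)) \cdot O(k\log(1/\eps)) = O(k\log^2(1/\eps))$ Case~3/4 freezings. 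Combining the three bounds gives $|\calP_s| \leq O(k) + O(k\log(1/\eps)) + O(k\log^2(1/\eps)) = O(k \log^2(1/\eps))$, as claimed.

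I expect the main obstacle to be the Step~4(b) charging argument: verifying the near-disjointness across iterations requires careful handling of overlapping pairs within a single iteration, and the key inequality $\alpha_p(I,J) \leq 2 \int_{I \cup J}|p-q|$ hinges on both the constancy of~$q$ on $I \cup J$ and the contraction property of flattening under $L_1$. The Cases~3 and~4 bookkeeping is conceptually simpler but still requires a small case check to confirm that Case~4 can only create a new frozen block in a bounded number of scenarios.
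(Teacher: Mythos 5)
Your proof is correct and follows essentially the same strategy as the paper's: bound $|\calP_s \setminus \calF_s|$ by the halving-per-iteration observation, bound the number of Step~4(b) freezings by charging each breakpoint-free frozen pair an $\Omega(\eps'/k)$ portion of $\|p-q\|_1$ mass (using Lemma~\ref{lem:difference} to pass from $\alpha_{\widehat p_m}$ to $\alpha_p$), and bound the Case~3/4 freezings by counting at most one per frozen block per iteration over $s$ rounds. The only cosmetic difference is that you derive the key inequality $\alpha_p(I,J)\leq 2\int_{I\cup J}|p-q|$ via the triangle inequality together with the $L_1$-contraction property of flattening, whereas the paper invokes its Claim~\ref{claim:simple} about optimal rescaling; both yield the same bound up to constants.
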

\begin{proof}
We start by recording a basic fact that will be useful in the proof of the lemma.
Let $p$ be a distribution over an interval $I$ 
and let $q$ be any sub-distribution over $I$. Perhaps contrary to initial 
intuition, the optimal scaling $c \cdot q$, $c > 0$, of $q$ to approximate $p$
(with respect to the $L_1$-distance) is not necessarily  obtained by 
scaling $q$ so that $c \cdot q$ is a distribution over $I$.
However, a simple argument (see e.g.,  Appendix A.1 of \cite{CDSS14a}) shows that scaling so that $c \cdot q$ 
is a distribution cannot result in $L_1$-error more than twice that of the 
optimal scaling:

\begin{claim} \label{claim:simple}
Let $p,g: I \to \R^{\geq 0}$ be probability  distributions over $I$ (so $\int_I p(x) dx= \int_I g(x) dx = 1$).  Then, writing $\|f\|_1$ to denote $\int_I |f(x)| dx$,
for every $a > 0$ we have that $\|p-g\|_1\leq 2 \|p - a g\|_1.$
\end{claim}

We now proceed with the proof of Lemma~\ref{lem:numintervals}.

We first show that a total of at most $O(k \log(1/\eps'))$ intervals are
ever added into $\calF_t$ across all executions of Step 4(b).

Suppose that intervals $I_{t-1, i}, I_{t-1, i+1}$ are added into $\calF_t$ 
in some execution of Step 4(b). We consider the following two cases:

\begin{itemize}
\item[{\bf Case 1:}] 
$I_{t-1, i} \cup I_{t-1, i+1}$ contains at least one 
breakpoint of $\calQ$. Since $\calQ$ has at most $k$ breakpoints, this can 
happen at most $k$ times in total.

\item[{\bf Case 2:}] 
$I_{t-1, i} \cup I_{t-1, i+1}$ does not contain any breakpoint of $\calQ$. 
Then $I_{t-1, i} \cup I_{t-1, i+1}$ is a subset of an interval in $\calQ$.
Recalling that intervals $I_{t-1,i},I_{t-1,i+1}$ were added into ${\cal F}_t$ in an execution of Step 4(b), 
we have that $\alpha_{\hat{p}_m}(I_{t-1,i},I_{t-1,i+1}) > \eps'/(2k),$ and hence
by Lemma \ref{lem:difference}, we have that
$\alpha_{p}(I_{t-1,i},I_{t-1,i+1}) \geq {\frac 1 5} \cdot {\frac {\eps'} k}$.
Claim \ref{claim:simple} now implies that
the contribution to the $L_1$ distance between $p$ and $q$
from $I_{t-1, i} \cup I_{t-1, i+1}$,
i.e., $\int_{I_{t-1,i} \cup I_{t-1,i+1}} |p(x)-q(x)|dx$,
is at least $\frac{1}{10}\frac{\eps'}{k}$.

Since $\|p-q\|_1 = \opt_k(p),$ there can be at most 
\[
k+O\left(\frac{\opt_k(p) \cdot k}{\eps'}\right)
= O\left(k \cdot \log {\frac 1 \eps}\right)
\]
intervals ever added into $\calF_t$ across all executions of Step 4(b)
(note that for the last equality we have used the assumption
that $\opt_k(p) \leq \eps$).
\end{itemize}

\ignore{
\rnote{Let's add some more argument elaborating on how the fact that 
$\calF_t$ gets at most
$O(k \log(1/\eps))$ intervals added into it in Step 4(b) in total,
implies that
I guess the rough idea is something like this:  There
are initially $O(k/\eps')$ intervals.  In a given iteration of the loop
(going from $\calP_{t-1},\calF_{t-1}$ to $\calP_t,\calF_t$), $\calF_t$ gets some intervals added
to it, both in Step 4(b) and also in 4(c)(3) and 4(c)(4).  Obviously
only $\log(1/eps)$ intervals get added in Step 4(c)(4), and above we showed
that only $O(k\log(1/\eps))$ intervals get added in Step 4(b).  I guess the 
idea is that the number of intervals that can be added in 4(c) can be bounded
by the number that were added in 4(b) (times $\log(1/\eps))$ since there are
$log(1/\eps)$ stages).  It might be good to argue this with some care/detail
since an interval $B$ which is added in a 4(c) stage (because of some 
pre-existing interval, call it $A$) can later cause another
intervals (call it $C$) to be added in a 4(c) stage.  This seems to 
\emph{a priori} raise
the possibility of fast growth in the \# of intervals added, but I guess
if $A$ causes $B$ to be added in some execution of 4(c) 
then $A$ cannot subsequently cause any other interval to be added; so I think
it's OK, but maybe worth explaining a bit more.  

Anyway, the above is to argue that there are at most $O(k \log^2(1/\eps))$ 
intervals ever added to $\calF_t$ in total.  I guess the idea is roughly
that the remaining intervals are halved in each stage (since all the other 
intervals get paired up in each stage) so the \# of them after 
$\log(1/\eps')$ stages is at most $(k/\eps') \cdot \eps' \leq k$.
}
}
Next, we argue that each $\calF_t$ satisfies
$|\calF_t| \leq O(k \log^2(1/\eps)).$
We have bounded the number of intervals added into $\calF_t$ in Step
4(b) by $O(k \log(1/\eps'))$, so it remains to bound the number of
intervals added in Step 4(c)(Case 3) and 4(c)(Case 4).  It is clear
that a total of at most $O(\log(1/\eps'))$ intervals are ever added in
4(c)(Case 4).  Inspection of Step 4(c)(Case 3) shows that for a given
value of $t$, the number of intervals that this step adds to $\calF_t$
is at most the number of ``blocks'' of consecutive $\calF_t$-intervals.  
Since each interval added in Step 4(c)(Case 3) extends some 
blocks of consecutive $\calF_t$-intervals but does not create a new one
(and hence does not increase their number),
across the $s=\log(1/\eps')$ stages, the total number of intervals
that can be added in executions of Step 4(c)(Case 3) 
is at most $O(k \log^2(1/\eps'))$.
It follows that we have $|\calF_s| = O(k \log^2(1/\eps))$ as claimed.

To bound $|\calP_{t} \setminus \calF_{t}|$, we observe that by inspection
of the algorithm, for each $t$ we have 
$|\calP_{t} \setminus \calF_{t}|\leq {\frac 1 2}
|\calP_{t-1} \setminus \calF_{t-1}|.$
Since $|\calP_0| = \Theta(k/\eps')$, it follows that $|\calP_s \setminus \calF_s| = O(k)$,
and the lemma is proved.
\end{proof}

The following definition will be useful:

\begin{definition} \label{def:goodpartition}
Let $\calP$ denote any partition of $[0, 1)$. 
We say that partition $\calP$ is \emph{$\eps'$-good} for $(p, q)$
if for every breakpoint $v$ of $\calQ$, the interval $I$ in $\calP$ 
containing $v$ satisfies $p(I) \leq \eps' / (2k).$
\end{definition}

The above definition is justified by the following lemma:

\begin{lemma}\label{lem:distance1}
If $\calP$ is $\eps'$-good for $(p, q)$, 
then 
$\|p - \flatten{p}{\calP}\|_1 \leq 2\opt_k(p) + \eps'$. 
\end{lemma}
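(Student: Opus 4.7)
The plan is to split the intervals of $\calP$ into \emph{good} ones (containing no breakpoint of $\calQ$) and \emph{bad} ones (containing at least one breakpoint), and then bound the $L_1$-error contribution from each type separately. Since $\calQ$ has at most $k$ breakpoints, there are at most $k$ bad intervals, and the $\eps'$-goodness hypothesis immediately gives that the total $p$-mass of bad intervals is at most $k \cdot \eps'/(2k) = \eps'/2$.

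On a good interval $I$, the key observation is that $q$ is constant on $I$ with value $q_I = q(I)/|I|$. I would therefore write $\flatten{p}{\calP}(x) = p(I)/|I| = q_I + (p(I)-q(I))/|I|$ and apply the triangle inequality inside the absolute value. Together with $q(x)=q_I$ on $I$, this gives
\[
\int_I \bigl|p(x) - \flatten{p}{\calP}(x)\bigr|\,dx
\;\leq\; \int_I |p(x)-q(x)|\,dx \;+\; |p(I)-q(I)|.
\]
On a bad interval $I$, I abandon the $q$-comparison and use the trivial bound that $p$ and $\flatten{p}{\calP}$ are both non-negative with the same integral $p(I)$, so $\int_I |p(x)-\flatten{p}{\calP}(x)|\,dx \leq 2p(I)$.

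Summing over all intervals in $\calP$ and using $\sum_{I \in \calP}|p(I)-q(I)| \leq \|p-q\|_1$ (a consequence of the triangle inequality $|p(I)-q(I)| \leq \int_I|p-q|$), along with $\|p-q\|_1 = \opt_k(p)$ and the mass bound $p(\bigcup\text{bad})\leq \eps'/2$, yields
\[
\|p - \flatten{p}{\calP}\|_1 \;\leq\; \|p-q\|_1 + \|p-q\|_1 + 2 \cdot \eps'/2 \;=\; 2\opt_k(p) + \eps',
\]
as required.

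The argument is essentially careful bookkeeping once the good/bad decomposition is in place; I do not anticipate a substantial obstacle. The one place to be careful is the good-interval bound: the intermediary inside the absolute value must be $q_I$ (so that $|q_I - p(I)/|I||$ becomes a pure mass-difference contributing to the $\|p-q\|_1$-type sum), rather than, say, $q(x)$ itself with a less structured remainder. This clean split is what lets the final constants come out to exactly $2\opt_k(p) + \eps'$ rather than a worse $3\opt_k(p)+\eps'$ one would get from a more naive three-way triangle inequality through $\flatten{q}{\calP}$.
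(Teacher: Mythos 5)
Your proof is correct and follows essentially the same approach as the paper: decompose $\calP$ into intervals containing a breakpoint of $\calQ$ (at most $k$, each with $p$-mass at most $\eps'/(2k)$ by $\eps'$-goodness) and those that fit inside a single piece of $\calQ$, then bound each type separately using $2p(I)$ for the former and the constancy of $q$ for the latter. The only cosmetic difference is that the paper first establishes $\|\flatten{p}{\calP}-q\|_1 \leq \opt_k(p)+\eps'$ and then applies a global triangle inequality via $q$, whereas you apply the triangle inequality locally on each good interval to bound $\|p-\flatten{p}{\calP}\|_1$ directly; the constants come out identically.
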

\begin{proof}
Fix an interval $I$ in $\calP$.
If there does not exist an interval $J$ in $\calQ$ such that $I \subseteq J$, then
$I$ must contain a breakpoint of $\calQ$, and hence since $\calP$ is $\eps'$-good
for $(p,q)$, we have $p(I)  \leq \eps'/(2k)$.  This implies that the
contribution to $\|\flatten{p}{\calP} - q\|_1$ that comes from $I$, namely
$\int_{I} |\flatten{p}{\calP}(x)-q(x)|dx$, satisfies
\begin{eqnarray*}
\int_{I} |\flatten{p}{\calP}(x)-q(x)|dx &\leq&
\int_{I} |\flatten{p}{\calP}(x)-p(x)|dx + 
\int_{I} |p(x)-q(x)|dx\\
&\leq&
\int_{I} |p(x)-q(x)|dx + 2p(I)\\
&\leq&
\int_{I} |p(x)-q(x)|dx + {\frac {\eps'}{k}}.
\end{eqnarray*}

The other possibility is that there exists an interval $J$ in $\calQ$ such  
that $I \subseteq J$. In this case, we have that
\[
\int_{I} |\flatten{p}{\calP}(x)-q(x)| dx \leq
\int_{I} |p(x)-q(x)| dx.
\]

Since there are at most $k$ intervals in $\calP$ containing breakpoints of $\calQ$,
summing the above inequalities over all intervals $I$ in $\calP$, we get that
\[
\|\flatten{p}{\calP} - q\|_1 \leq
\|p - q\|_1 + \eps' = \opt_k(p)+\eps',
\]
and hence
\[
\|\flatten{p}{\calP} - p\|_1 \leq
\|\flatten{p}{\calP} - q\|_1 + \|p - q\|_1 \leq 
2\opt_k(p)+\eps'.
\]
\end{proof}

We are now in a position to prove the following:

\begin{lemma}\label{lem:distance2}
There exists a partition $\calR$ of $[0,1)$ that is $\eps'$-good for $(p, q)$ 
and satisfies
\[
\|\flatten{p}{\calP_s} - \flatten{p}{\calR}\|_1 \leq \eps.\]
\end{lemma}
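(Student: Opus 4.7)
The plan is to build $\calR$ as a refinement of $\calP_s$ that isolates each breakpoint of $\calQ$ inside its original $I_j \in \calP_0$. Every interval $I \in \calP_s$ that contains no breakpoint of $\calQ$ is copied into $\calR$ unchanged; every $I \in \calP_s$ that contains one or more breakpoints is split by undoing, along the merge--tree paths from $I$ down to each $I_j \subseteq I$ that contains a breakpoint, the pairwise merges performed by the algorithm in Step~4(c)(Case~1). Since each interval of $\calP_s$ is a union of consecutive intervals of $\calP_0$ assembled over at most $s = \log_2(1/\eps')$ stages, this is well defined. By Step~1 of the algorithm together with Lemma~\ref{lem:part-approx-unif}, each $I_j \in \calP_0$ satisfies $p(I_j) \leq \eps'/(2k)$, so in $\calR$ the interval containing any breakpoint of $\calQ$ is exactly some $I_j$ and therefore has $p$--mass at most $\eps'/(2k)$. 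Hence $\calR$ is $\eps'$-good for $(p,q)$.

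For the $L_1$ bound I plan a telescoping argument. Enumerate a sequence of partitions $\calP_s = \calS_0, \calS_1, \dots, \calS_m = \calR$, where each $\calS_{i+1}$ is obtained from $\calS_i$ by splitting a single interval $C_i$ into two consecutive pieces $A_i, B_i$, i.e.\ by undoing one of the algorithm's merges. Since $\flatten{p}{\calS_i}$ and $\flatten{p}{\calS_{i+1}}$ agree outside $C_i = A_i \cup B_i$, the contribution of step $i$ to the $L_1$ distance is exactly $\alpha_p(A_i, B_i)$. The algorithm merged $A_i$ and $B_i$ in some Step~4(c)(Case~1) only when both were non-frozen after Step~4(b), which forces $\alpha_{\widehat{p}_m}(A_i, B_i) \leq \eps'/(2k)$; Lemma~\ref{lem:difference} then yields $\alpha_p(A_i, B_i) \leq \eps'/(2k) + 2\eps'/(5k) = 9\eps'/(10k)$. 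Each of the at most $k$ breakpoints of $\calQ$ is isolated using at most $s$ undos (the depth of its $I_j$ in the merge tree rooted at its enclosing $\calP_s$--interval), so $m \leq ks$, and the triangle inequality gives
\[
\|\flatten{p}{\calP_s} - \flatten{p}{\calR}\|_1 \;\leq\; ks \cdot \frac{9\eps'}{10k} \;=\; \tfrac{9}{10}\, s\, \eps' \;\leq\; \eps,
\]
the final inequality holding for $\eps$ sufficiently small by the choices $\eps' = \eps/\log(1/\eps)$ and $s = \log_2(1/\eps')$.

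The main technical care is the uniform $9\eps'/(10k)$ bound on every undo, even for intervals that end up in $\calF_s$. If $I \in \calF_s$ was frozen at some stage $t_0$, then the merges that assembled $I$ all occurred in stages strictly before $t_0$, when $I$'s predecessor sub-intervals were still non-frozen and hence (being unflagged by Step~4(b) at their respective stages) satisfied the $\alpha_{\widehat{p}_m}$-bound; freezing itself performs no merge and so contributes no cost when the history is undone. Thus the per--undo bound $9\eps'/(10k)$ holds uniformly, and the counting in the previous paragraph closes the proof.
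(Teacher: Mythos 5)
Your proof is correct and follows essentially the same route as the paper's. You build $\calR$ by the same recursive ``undo the merges along breakpoint paths'' construction (the paper packages this as a procedure {\tt Refine-partition}); you get $\eps'$-goodness for the same reason (a breakpoint ends up isolated in a $\calP_0$ interval of mass $\leq \eps'/(2k)$); and you bound $\|(p)^{\calP_s}-(p)^{\calR}\|_1$ by the same two ingredients the paper uses -- a per-undo bound of $O(\eps'/k)$ via the ``merged $\Rightarrow$ not frozen $\Rightarrow$ small $\alpha_{\widehat p_m}$ $\Rightarrow$ small $\alpha_p$'' chain plus Lemma~\ref{lem:difference}, and a count of at most $k\cdot s$ undos (one per breakpoint per stage). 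Your explicit telescoping over single splits $\calS_0,\dots,\calS_m$ is a somewhat cleaner way to phrase what the paper argues by ``inspection of {\tt Refine-Partition},'' and your aside about intervals in $\calF_s$ correctly addresses the point that freezing contributes no undo cost. The only discrepancies are in inessential constants: you carry $9\eps'/(10k)$ per undo where the paper writes $4\eps'/(5k)$ (both are valid upper bounds given the slack in Lemma~\ref{lem:difference}), and both you and the paper need a ``sufficiently small $\eps$'' caveat to absorb the difference between $s=\log_2(1/\eps')$ and $\log(1/\eps)$, which you are commendably explicit about.
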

\begin{proof}
We construct the claimed $\calR$ based on $\calP_s,\calP_{s-1},\dots,\calP_0$ 
as follows:
\begin{itemize}
\item [(i)] If $I$ is an interval in $\calP_s$ not containing a breakpoint of $\calQ$, 
then $I$ is also in $\calR$. 
\item [(ii)] 
If $I$ is an interval in $\calP_s$ that does contain a breakpoint of $\calQ$,
then we further partition $I$ into a set of intervals $S$ by calling procedure {\tt Refine-partition}($s, I$).  This recursive procedure exploits the
local structure of the earlier, finer partitions $\calP_{s-1},\calP_{s-2},\dots$
as described below. 

\begin{framed}
\noindent {\bf Procedure {\tt Refine-partition}:}

\smallskip

\noindent {\bf Input:}   Integer $t$, Interval $J$

\noindent {\bf Output:}  $S$, a partition of interval $J$

\begin{enumerate}
\item If $t = 0$, %{\tiny \blue{[[[Rocco:  was ``$t=1$'']]]}}
 then output $\{J\}$.
\item If $J$ is an interval in $\calP_t$, then 
\begin{enumerate}
\item If $J$ contains a breakpoint of $\calQ$, then output   {\tt Refine-partition}($t-1$, $J$).
\item Otherwise output $\{J\}$.
\end{enumerate}
\item Otherwise, $J$ is a union of two intervals in $\calP_t$. Let $J_1$ and $J_2$ denote the two intervals in $\calP_t$ such that $J_1 \cup J_2 = J$. Output {\tt Refine-partition}($t$, $J_1 $) $\cup$ {\tt Refine-partition}($t$, $J_2 $).
\end{enumerate}
\end{framed}
\end{itemize}

We claim that $|\calR|$ (the number of intervals in $\calR$) is at most
$|\calP_s| + O(k \cdot \log \frac{1}{\eps})$.
To see this, note that each interval $I \in \calP_s$ not containing
a breakpoint of $\calQ$ (corresponding to (i) above) translates directly to
a single interval of $\calR$.  For each interval of type (ii) in $\calP_s$,
inspection of the {\tt Refine-Partition} procedure shows that 
that these intervals are partitioned into at most $O(k \log (1/\eps))$ 
intervals in $\calR$.

In the rest of the proof, we show that for any interval $J$ in $\calP_s$ 
containing at least one breakpoint of $\calQ$, 
the contribution to the $L_1$ distance between 
$\flatten{p}{\calP_s}$ and $\flatten{p}{\calR}$ coming
from interval $J$ is at most $|b_J| \cdot \frac{\eps' \log \frac{1}{\eps}}{k}$, where $b_J$ is the set of breakpoints of $\calQ$ in $J$.

Consider a fixed breakpoint $v$ of $\calQ$. Let $I_{t, v}$ denote the 
interval containing $v$ in the partition $\calP_t$. If $I_{t, v}$ 
merges with another interval in $\calP_t$ in Case 1 of Step 4(c), 
we denote that other interval as $I_{t, v}'$. 
Since $I_{t,v}$ merges with $I'_{t,v}$ in Case 1 of Step 4(c), these intervals
are both not in ${\cal F}_t$ and hence were both not in 
${\cal F}_{t-1}$ in Step 4(b).
Consequently when $t > 1$ it must be the case that condition (ii) of Step 4(b)
does not hold for these intervals, i.e., $\alpha_{\hat{p}_m}(I_{t,v},I'_{t,v})
\leq \eps'/(2k).$  It follows that by Lemma \ref{lem:difference}, we have that
$\alpha_p(I_{t,v}, I_{t, v}')$ is at most $\frac{4\eps'}{5k}$.
When $t = 1$, we have a similar bound $\alpha_p(I_{t,v},I'_{t,v}) \leq
\eps'/k$, by using \eqref{eq:alpha} and the fact that $p(I_{t,v}),p(I'_{t,v})
\leq \eps'/2k$ when $I_{t,v},I'_{t,v} \in \calP_0$.

%\begin{eqnarray*}
% \alpha_p(I_{t-1, i}, I_{t-1, i+1}) &\geq&  \alpha_{\hat p_m}(I_{t-1, i}, I_{t-1, i+1}) - |p(I_{t-1, i} ) - \hat p(I_{t-1, i}))| - 
%|p(I_{t-1, i+1}) - \hat p(I_{t-1, i + 1}))|\\
%&& - |p(I_{t-1, i} \cup I_{t-1, i+1})  - \hat p(I_{t-1, i} \cup I_{t-1, i + 1}))|\\
%&\geq& \alpha_{\hat{p}_m}(I_{t-1,i},I_{t-1,i+1})
%- 2\eps'/(5k),
%\end{eqnarray*}
%and similarly
%\begin{eqnarray*}
%\alpha_p(I_{t-1, i}, I_{t-1, i+1}) 
%&\leq&  \alpha_{\hat p_m}(I_{t-1, i}, I_{t-1, i+1}) + |p(I_{t-1, i} ) - \hat p(I_{t-1, i}))| + |p(I_{t-1, i+1}) - \hat p(I_{t-1, i + 1}))|\\
%&& + |p(I_{t-1, i} \cup I_{t-1, i+1})  - \hat p(I_{t-1, i} \cup I_{t-1, i + 1}))|\\
%&\leq&  
%\alpha_{\hat{p}_m}(I_{t-1,i},I_{t-1,i+1})
%+ 2\eps'/(5k).
%\end{eqnarray*}

On the other hand, inspection of the procedure {\tt Refine-Partition}
gives that if two intervals in $\calP_t$ are unions of some intervals in {\tt Refine-partition}$(s, I)$, and their union is an interval in
$\calP_{t+1}$, then there exists $v$ which is a breakpoint of $\calQ$ such that the two intervals are $I_{t, v}$ and $I_{t, v}'$.

Thus, the contribution to the $L_1$ distance 
between $\flatten{p}{\calP_s}$ and $\flatten{p}{\calR}$ coming from interval $J$
is at most $\frac{\eps'}{k} \cdot \log \frac{1}{\eps'} \cdot |b_J|$.
Summing over all intervals $J$ that contain at least one breakpoint
and recalling that the total number of breakpoints
is at most $k$, we get that the overall $L_1$ distance
between $\flatten{p}{\calP_s}$ and $\flatten{p}{\calR}$ is at most $\eps$.
\end{proof}

%We construct the claimed $\calR$ based on $\calP_s,\calP_{s-1},\dots,\calP_0$ 
%as follows: (i) If $I$ is an interval in $\calP_s$ not containing a breakpoint of $\calQ$, 
%then $I$ is also in $\calR$; (ii) If $I$ is an interval in $\calP_s$ that does contain a breakpoint of $\calQ$,
%then we further partition $I$ into a set of intervals $S$ in a recursive 
%manner using $\calP_{s-1},\dots,\calP_0$ (see Appendix~\ref{ap:d2}).
Finally, by putting everything together we can prove 
Theorem~\ref{thm:algorithm}:

\begin{proof}[{\bf \em Proof of Theorem \ref{thm:algorithm}}]
By Lemma \ref{lem:distance1} applied to $\calR$, we have that
$\|p - \flatten{p}{\calR}\|_1 \leq 2 \opt_k(p)+\eps'.$
By Lemma \ref{lem:distance2}, we have that 
$\|\flatten{p}{\calP_s} - \flatten{p}{\calR}\|_1 \leq \eps$; thus the triangle
inequality gives that
$\|p - \flatten{p}{\calP_s}\|_1 \leq 2 \opt_k(p) + 2 \eps.$
By Lemma \ref{lem:numintervals} the partition $\calP_s$ contains at most
$O(k \log^2(1/\eps))$ intervals, so both $\flatten{p}{\calP_s}$ and
$\flatten{\widehat{p}_m}{\calP_s}$ are $O(k \log^2(1/\eps))$-flat distributions.
Thus, 
$\|\flatten{p}{\calP_s} - \flatten{\widehat{p}_m}{\calP_s}\|_1 = 
\|\flatten{p}{\calP_s} - \flatten{\widehat{p}_m}{\calP_s}\|_{{\cal A}_\ell}$,
where $\ell = O(k \log^2(1/\eps))$ 
and ${\cal A}_{\ell}$ is the family of all subsets of $[0,1)$ that consist of
unions of up to $\ell$ intervals (which has VC dimension $2 \ell$).
Consequently by the VC inequality (Theorem \ref{thm:vc-inequality},
for a suitable choice of $m=\tilde{O}(k/\eps'^2)$, we have that
$\E[
\|\flatten{p}{\calP_s} - \flatten{\widehat{p}_m}{\calP_s}\|_1] \leq 4\eps'/100.
$  Markov's inequality now gives that with probability at least $96/100$,
we have
$\|\flatten{p}{\calP_s} - \flatten{\widehat{p}_m}{\calP_s}\|_1 \leq \eps'.$
Hence, with overall probability at least $19/20$ (recall 
the 1/100 error probability incurred in Lemma \ref{lem:cdss}),
we have that 
$\|p - \flatten{\widehat{p}_m}{\calP_s}\|_1 \leq 
2 \opt_k(p) + 3 \eps,$ and the theorem is proved.
\end{proof}

\subsection{A general reduction to the case of small $\opt$ for semi-agnostic 
learning} \label{sec:cleanup}

In this section we show that under mild conditions, the general problem 
of agnostic distribution learning for a class ${\cal C}$
can be efficiently reduced to the special case when $\opt_{\cal C}$ is not
too large compared with $\eps$.  While the reduction is simple
and generic, we have not previously encountered it in the literature
on density estimation, so we provide a proof in the following.
A precise statement of the reduction follows:

\begin{theorem} \label{thm:reduction}
Let $A$ be an algorithm with the following behavior:  $A$
is given as input i.i.d. points drawn from $p$ and a parameter $\eps > 0$.
$A$ uses $m(\eps) = \Omega(1/\eps)$ draws from $p$, runs in time 
$t(\eps) = \Omega(1/\eps)$,
and satisfies the following:  if $\opt_{{\cal C}}(p) \leq 
10\eps$, then with probability at least $19/20$ it outputs a hypothesis distribution
$q$ such that 
(i) $\|p - q\|_1 \leq \alpha \cdot \opt_{{\cal C}}(p) + \eps$,
where $\alpha$ is an absolute constant, and (ii) given any $r \in [0,1)$, the value
$q(r)$ of the pdf of $q$ at $r$ can be efficiently computed in $T$ time steps.

Then there is an algorithm $A'$ with the following performance guarantee:
$A'$ is given as input i.i.d. draws from $p$ and a parameter 
$\eps > 0$.\footnote{
Note that now there is no guarantee that $\opt_{{\cal C}}(p) \leq \eps$; indeed,
the point here is that $\opt_{{\cal C}}(p)$ may be arbitrary.}
Algorithm $A'$ uses $O(m(\eps/10) 
+ \log \log(1/\eps)/\eps^2)$ draws from $p$,
runs in time $O(t(\eps/10)) + T \cdot \tilde{O}(1/\eps^2)$, 
and outputs a hypothesis distribution
$q'$ such that with probability at least $39/40$ we have
$\|p - q'\|_1 \leq 10(\alpha+2)\cdot \opt_{{\cal C}}(p) + \eps.$
\end{theorem}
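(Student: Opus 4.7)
The approach I will take is a standard cover-and-select reduction: first build a small pool of candidate hypotheses guaranteed to contain one that is near-optimal, then use a Scheff\'e-type tournament to pick a near-best one from the pool. This is a natural template for converting an algorithm that only works in a small-$\opt$ regime into one that handles arbitrary $\opt$.

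For the cover phase, I will fix a geometric schedule $\eps_j = (\eps/10) \cdot 2^j$ for $j = 0,\ldots,L$ with $L = O(\log(1/\eps))$ chosen so that $\eps_L$ exceeds the maximum possible value of $\opt_{\cal C}(p)$, and invoke $A$ on fresh samples with parameter $\eps_j$ to obtain a candidate $q_j$. Because the $\eps_j$ form a geometric progression and $m$ is assumed to grow at most polynomially in $1/\eps$, the combined sample usage $\sum_j m(\eps_j)$ collapses to $O(m(\eps/10))$, and similarly $\sum_j t(\eps_j) = O(t(\eps/10))$. Let $j^*$ be the smallest index with $\eps_{j^*} \geq \opt_{\cal C}(p)/10$. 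For this $j^*$ the precondition of $A$ is satisfied, so with probability at least $19/20$ the output $q_{j^*}$ obeys $\|p-q_{j^*}\|_1 \leq \alpha \opt_{\cal C}(p) + \eps_{j^*}$. By minimality of $j^*$, either $j^* = 0$ (so $\eps_{j^*} = \eps/10$) or $\eps_{j^*} = 2\eps_{j^*-1} < \opt_{\cal C}(p)/5$; thus the pool contains a hypothesis of $L_1$-error at most $(\alpha + 1/5)\opt_{\cal C}(p) + \eps/10$.

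For the selection phase, I will apply the standard Scheff\'e-type tournament (see e.g.~\cite{DL:01}) to the pool of $N = L+1 = O(\log(1/\eps))$ candidates; with high probability it returns a hypothesis $q'$ with $\|p - q'\|_1 \leq 3 \min_j \|p - q_j\|_1 + \eps/2$. The tournament draws only $O(\log N / \eps^2) = O(\log\log(1/\eps)/\eps^2)$ samples from $p$, and each of its pairwise comparisons reduces to estimating the $p$-mass of a Scheff\'e set and evaluating the two competing pdfs at sample points, contributing $T \cdot \tilde O(1/\eps^2)$ to the total runtime. Combining the two phases via the triangle inequality then yields $\|p - q'\|_1 \leq 3[(\alpha+1/5)\opt_{\cal C}(p) + \eps/10] + \eps/2 \leq 10(\alpha+2)\opt_{\cal C}(p) + \eps$, matching the claimed error guarantee and resource bounds.

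The main obstacle is really just bookkeeping rather than any deep difficulty: the key design choice is the geometric spacing of $\eps_j$, which ensures that some call $A$ with parameter $\eps_{j^*}$ lands in the regime where $\opt \lesssim \eps_{j^*}$, while keeping the pool size only $O(\log(1/\eps))$. A routine union bound handles the $1/20$ failure of the critical call and the tournament failure (tuned small enough); because only the pool's best hypothesis matters to the tournament guarantee, arbitrary bad behavior from calls at $j \neq j^*$ is harmless. If needed, a constant number of independent repetitions of $A$ at each scale drives the overall success probability above the claimed $39/40$ without affecting any asymptotic bound.
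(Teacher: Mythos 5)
Your proposal is correct and follows essentially the same route as the paper: a geometric grid of guesses $\eps_j$ fed to $A$ to build a pool guaranteed to contain a near-optimal candidate, followed by a Scheff\'e-style tournament to select from the pool, with the same sample/runtime accounting (geometric sum for the cover phase, $O(\log\log(1/\eps)/\eps^2)$ for the tournament over $O(\log(1/\eps))$ candidates). The only cosmetic differences are the precise criterion for the critical index (you take the smallest $j$ with $\opt\le 10\eps_j$, the paper takes the smallest guess $g_i\ge\opt$) and the Scheff\'e guarantee invoked (you use the classic $3\tau+\eps$ form, the paper uses the SCHEFFE$^*$ variant with a $10\max\{\tau,\eps\}$ guarantee from~\cite{AJOS14}); both lead to the stated $10(\alpha+2)\opt+\eps$ bound.
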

\begin{proof}
The algorithm $A'$ works in two stages, which we describe
and analyze below.

In the first stage, $A'$ iterates over $\lceil \log(20/\eps) \rceil$ 
``guesses''
for the value of $\opt_{{\cal C}}(p)$, where the $i$-th guess $g_i$
is 
${\frac \eps {10}} \cdot 2^{i-1}$ (so 
$g_1={\frac \eps {10}}$ and $g_{\lceil \log(20/\eps) \rceil} \geq 1$).
For each value of $g_i$, it performs $r=O(1)$ runs of Algorithm $A$
(using a fresh sample from $p$ for each run) using parameter $g_i$
as the ``$\eps$'' parameter for each run; let $h_{1,i},\dots,h_{r,i}$
be the $r$ hypotheses thus obtained for the $i$-th guess.
It is clear that this stage uses $O(m(\eps/10) + m(2 \eps/10) + 
\cdots) = O(m(\eps))$ draws from $p$,
and similarly that it runs
in time $O(t(\eps))$.  If $\opt_{{\cal C}}(p) \leq \eps$, then (for a suitable choice of
$r=O(1)$) we get that with probability at least 39/40, some hypothesis $h_{1,\ell}$ satisfies
$\|p - h_{1,\ell}\| \leq \alpha \cdot \opt_{{\cal C}}(p) + \eps/10$.  Otherwise,
 there must be some $i \in \{2,\dots,\lceil \log(20/\eps) \rceil\}$ 
such that
$g_i/2 < \opt_{{\cal C}}(p) \leq g_i$; in this case,
for a suitable choice of $r=O(1)$ we get that with probability at least
$39/40$, there is some hypothesis $h_{i,\ell}$ that satisfies
$\|p - h_{i,\ell}\|_1 \leq \alpha \cdot  \opt_{{\cal C}}(p) + g_i \leq
(\alpha + 2) \cdot \opt_{{\cal C}}(p)$.  Thus in either event, with probability at least
$39/40$ some $h_{i,\ell}$ satisfies $\|p - h_{i,\ell}\|_1 \leq 
(\alpha + 2) \cdot \opt_{{\cal C}}(p)+ \eps/10.$

In the second stage, $A'$ runs a hypothesis selection procedure to choose
one of the candidate hypotheses $h_{i,\ell}$. A number of such procedures are
known (see e.g., Section 6.6 of \cite{DL:01} or
\cite{DDS13iaug,DaskalakisKamath14,AJOS14}); all of them work by running some sort of ``tournament'' over  the hypotheses, and all have the guarantee that with high probability
they will output a hypothesis from the pool of candidates which
has $L_1$ error (with respect to the target distribution $p$)
not much worse than that of the best candidate in the pool.
We use the classic Scheff\'e algorithm (see \cite{DL:01}) as described and analyzed in
\cite{AJOS14} (see Algorithm SCHEFFE$^*$ in Appendix B of that paper).
Adapted to our context, this algorithm has the following performance guarantee:
%\rnote{\red{I think we cannot use the \cite{DDS13iaug} or \cite{DaskalakisKamath14} hypothesis 
%selection procedures since they explicitly require an input parameter $\eps$ such that
%some hypothesis in the pool is $\eps$-good -- those algorithms really use this ``$\eps$''
%parameter in the way they work.  In the current setting we don't know what this 
%parameter is.  The Scheff\'e algorithm is written in a brief and
%non-computational way in the \cite{DL:01} book, but \cite{AJOS14} gives a more or less clear statement
%of the algorithm and its performance (though they do not give a proof).    

%The key thing that makes the \cite{AJOS14} result useful to us where the \cite{DDS13iaug} ones are not is that the guarantee it provides is $O(\tau + \eps)$, even if you do not %know $\tau$.  
%They write the algorithm explicitly and it is clear it doesn't use $\tau$ at all (and in fact only uses $\eps$ to set the number
%of draws from $p$ that it makes).

%Might be a good idea for someone to check that my interpretation of the \cite{AJOS14} result is correct.}}

\begin{proposition} \label{prop:scheffe-ajos14}
Let $p$ be a target distribution over $[0,1)$
and let ${\cal D}_\tau = \{ p_j\}_{j=1}^N$ be a collection of $N$ distributions
over $[0,1)$ with the property that there exists $i \in [N]$ such that
$\|p-p_i\|_1 \leq \tau$.
There is a procedure SCHEFFE which is given as input a parameter $\eps > 0$ and
a confidence parameter $\delta>0$, and is provided
with access to
\begin{itemize}

\item [(i)] i.i.d. draws from $p$ and from $p_i$ for all $i \in [N]$, and

\item [(ii)] an \emph{evaluation oracle} $eval_{p_i}$
for each  $ \in [N]$.  This is a procedure which, on input $r \in [0,1)$,
outputs the value $p_i(r)$ of the pdf of $p_i$ at the point $r$.

\end{itemize}

The procedure SCHEFFE has the following behavior:
It makes $s = O\left( (1/ \eps^{2}) \cdot (\log N + \log(1/\delta)) \right)$ draws from $p$ and
from each $p_i$, $i \in [N]$,
and $O(s)$ calls to each oracle $eval_{p_i}$, $i \in [N]$, and performs $O(s N^2)$ arithmetic operations.
With probability at least $1-\delta$ it outputs an index $i^{\star} \in [N]$ that satisfies $\|p - p_{i^\star}\|_1 \leq 10 \max\{\tau,\eps\}.$
\end{proposition}

The algorithm $A'$ runs the procedure SCHEFFE using the $N=O(\log(1/\eps))$ hypotheses $h_{i,\ell}$,
with its ``$\eps$'' parameter set to ${\frac 1 {10} \cdot}($the 
input parameter $\eps$ that is given to $A')$ and its ``$\delta$'' parameter set to $1/40$.  By Proposition \ref{prop:scheffe-ajos14}, with overall
probability at least $19/20$ the output is a hypothesis $h_{i,\ell}$ satisfying
$\|p-h_{i,\ell}\|_1 \leq 10(\alpha + 2)\opt_{{\cal C}}(p) + \eps$.  The
overall running time and sample complexity are easily seen to be as claimed, and the theorem
is proved.
%\end{proof}

%}
\end{proof}

\subsection{Dealing with distributions that are not well behaved}
\label{sec:heavyok}

The assumption that the target distribution $p$ is
$\tilde{\Theta}(\eps/k)$-well-behaved can be straightforwardly
removed by following the approach in Section~3.6 of
\cite{CDSS14a}.  That paper presents a simple linear-time sampling-based
procedure, using $\tilde{O}(k/\eps)$ samples, that with high probability
identifies all the ``heavy'' elements (atoms which cause $p$ to not be 
well-behaved, if any such points exist).

Our overall algorithm first runs this procedure to find the set $S$ of
``heavy'' elements, and then runs the algorithm presented above
(which succeeds for well-behaved distributions, i.e., distributions that
have no ``heavy'' elements) using as its target distribution 
the conditional distribution of $p$ over $[0,1) \setminus S$ 
(let us denote this conditional distribution by $p'$).
A straightforward analysis given in \cite{CDSS14a} shows
that (i) $\opt_k(p) \geq \opt_k(p')$, and moreover (ii) 
$\dtv(p,p') \leq \opt_k(p)$.
Thus, by the triangle inequality,
any hypothesis $h$ satisfying $\dtv(h,p') \leq C \opt_k(p') + \eps$
will also satisfy $\dtv(h,p) \leq (C+1) \opt_k(p)+\eps$ as desired.

\ignore{

\red{I think we can just do exactly what we do in the
\cite{CDSS14} paper, and the same analysis given there
yields that it will just cost
us an extra additive $1 \cdot \opt_k(p)$.  Someone else should check to make 
sure this is indeed the case.

If indeed dealing with this annoying issue is 
really just a rehash of how we handle it in 
\cite{CDSS14}, probably we should just say that it can be done
and deal with the whole issue in a few sentences:  say that first we run
a simple procedure that finds heavy elements, then we run the algorithm from
earlier for well-behaved target functions on the conditional distribution 
obtained from the target by conditioning on not drawing an element
identified as heavy.  Probably not even worth stating the exact
performance guarantee of the {\tt Find-Heavy-Elements} procedure.}
}

\section{Lower bounds on agnostic learning} \label{sec:lower}

In this section we establish that $\alpha$-agnostic learning with $\alpha<2$ is information theoretically impossible,
thus establishing Theorem~\ref{thm:main2-informal}.

Fix any $0 < t < 1/2$.  We define a probability distribution  ${\cal D}_{t}$ over a finite set of discrete distributions over the domain $[2N] = \{1,\dots,2N\}$ as follows.  (We assume without
loss of generality below that $t$ is rational and that $tN$ is an integer.)  A draw of
$p_{S_1,S_2,t}$ from ${\cal D}_t$ is obtained as follows.

\begin{enumerate}

\item A set $S_1 \subset [N]$ is chosen uniformly at random from all subsets of $[N]$ that
contain precisely $t N $ elements.  For $i \in [N]$, the distribution $p_{S_1,S_2,t}$
assigns probability weight as follows:
\[
p_{S_1,S_2,t}(i) = 
{\frac 1 {4N}}  \text{~if~}i \in S_1, \quad \quad 
p_{S_1,S_2,t}(i) = 
{\frac 1 {2N}}\left(1 + {\frac t {2(1-t)}}\right)  \text{~if~} i \in [N] \setminus S_1
.
\]

\item A set $S_2 \subset [N+1,\dots,2N]$ is chosen uniformly at random from all subsets of $[N+1,\dots,2N]$ that
contain precisely $ t N $ elements.  For $i \in [N+1,\dots,2N]$, the distribution $p_{S_1,S_2,t}$
assigns probability weight as follows:
\[
p_{S_1,S_2,t}(i) = 
{\frac 3 {4N}} \text{~if~}i \in S_2, \quad \quad 
{\frac 1 {2N}}\left(1 - {\frac t {2(1-t)}}\right)  \text{~if~} i \in [N] \setminus S_1
.
\]

\end{enumerate}

Using a birthday paradox type argument, we show that no $o(\sqrt{N})$-sample
algorithm can successfully distinguish between
a distribution $p_{S_1,S_2,t} \sim {\cal D}_t$ and the uniform distribution
over $[2N]$.
We then leverage this indistinguishability to show that any 
$(2-\delta)$-semi-agnostic learning algorithm, even for $2$-flat distributions,
must use a sample of size $\Omega(\sqrt{N})$:

\begin{theorem} \label{thm:main2-formal}
Fix any $\delta>0$ and any function $f(\cdot)$. There is no algorithm $A$ with the following property:
given $\eps > 0$ and access to independent points drawn from an unknown distribution $p$
over $[2N]$, algorithm $A$ makes $o(\sqrt{N}) \cdot f(\eps)$ draws from $p$ and with probability
at least $51/100$ outputs a hypothesis distribution $h$ over $[2N]$ satisfying
$\|h - p\|_1 \leq (2-\delta)\opt_2(p)+\eps$.
\end{theorem}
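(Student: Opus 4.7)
The plan is to turn any $(2-\delta)$-agnostic learner for $2$-flat distributions into a procedure that distinguishes the uniform distribution $u$ on $[2N]$ from a random draw $p_{S_1,S_2,t} \sim {\cal D}_t$ with $\Omega(1)$ advantage, and then to show that any such distinguisher requires $\Omega(\sqrt{N})$ samples by a birthday/second-moment argument. Throughout, $t = t(\delta)$ will be a small constant depending only on $\delta$, and $\eps = \eps(\delta)$ will be smaller still, so that $f(\eps)$ is an absolute constant.

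I first verify the key metric inequalities for the construction. A direct two-term summation on each half gives $\|p_{S_1,S_2,t} - u\|_1 = t$ (each half contributing $t/2$, since on $[1,N]$ the values $1/(4N)$ and $\tfrac{1}{2N}(1+\tfrac{t}{2(1-t)})$ are at $L_1$-distance $1/(4N)$ and $\tfrac{t}{4N(1-t)}$ from uniform, and there are $tN$ and $(1-t)N$ of each). For $\opt_2$, the explicit $2$-flat $q$ split at $N$ with densities equal to the ``majority'' values $\tfrac{1}{2N}\bigl(1\pm\tfrac{t}{2(1-t)}\bigr)$ on the two halves agrees with $p_{S_1,S_2,t}$ outside $S_1\cup S_2$, and a one-line calculation on $S_1$ and $S_2$ yields $\|p_{S_1,S_2,t}-q\|_1 = \tfrac{t}{2(1-t)}$, so $\opt_2(p_{S_1,S_2,t}) \le \tfrac{t}{2(1-t)}$.

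Now fix $t = \delta/100$ so that $(2-\delta)\cdot\tfrac{t}{2(1-t)} \le (1-\delta/4)\,t$, and set $\eps = \delta t/40$. Suppose for contradiction an algorithm $A$ meets the theorem's guarantees using $m = o(\sqrt{N})\cdot f(\eps)$ samples. Given the hypothesis $h$, the distinguisher $D$ reports ``uniform'' iff $\|h-u\|_1 \le 2\eps$. If $p=u$, then $\opt_2(u)=0$ and with probability $\ge 51/100$ one has $\|h-u\|_1 \le \eps$, so the verdict is correct. If instead $p = p_{S_1,S_2,t}$ for random $(S_1,S_2)\sim{\cal D}_t$, then with probability $\ge 51/100$ (over the joint randomness) the triangle inequality gives
\[ \|h-u\|_1 \;\ge\; \|p-u\|_1 - \|h-p\|_1 \;\ge\; t - (2-\delta)\tfrac{t}{2(1-t)} - \eps \;\ge\; (\delta/4)\,t - \eps \;>\; 2\eps, \]
so the verdict is again correct. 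This yields a distinguisher with advantage at least $1/50$ over random guessing.

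The main technical obstacle is to show that any $m$-sample test distinguishing $u$ from a random $p\sim{\cal D}_t$ has advantage only $o(1)$ when $m=o(\sqrt{N})$. By the data-processing inequality it suffices to prove $\dtv\bigl(u^{\otimes m},\,\E_{{\cal D}_t}[p_{S_1,S_2,t}^{\otimes m}]\bigr) = O(m^2/N)$. Condition on the birthday event that all $m$ samples are distinct, which holds with probability $1-O(m^2/N)$ under either measure. For any distinct tuple $(x_1,\dots,x_m)$ with $|A|$ coordinates in $[1,N]$ and $m-|A|$ in $[N+1,2N]$, the probability under $u^{\otimes m}$ is $(2N)^{-m}$, while under $\E[p^{\otimes m}]$ the probability factors across the two halves into independent factors of the form $\E_{S_1}\bigl[\prod_{x_i\in[1,N]} p(x_i)\bigr]$. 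Writing $p(x_i) = b + (a-b)\mathbf{1}_{S_1}(x_i)$ with $a = 1/(4N)$ and $b = \tfrac{1}{2N}(1+\tfrac{t}{2(1-t)})$ and using the hypergeometric identity $\Pr[T\subseteq S_1] = \prod_{j=0}^{|T|-1}\tfrac{tN-j}{N-j} = t^{|T|}\bigl(1 + O(|T|^2/N)\bigr)$, the expansion telescopes: the leading term equals $(b + (a-b)t)^{|A|} = (2N)^{-|A|}$ (matching the uniform marginal $\E_{S_1}[p(x_i)] = 1/(2N)$), and the $O(|T|^2/N)$ correction, weighted by $|a-b|^{|T|}b^{|A|-|T|}$, sums to an $O(m^2/N)$ relative error via a routine second-moment bound on the pairwise $(a-b)$-weighted correlations $\Cov(\mathbf{1}_{S_1}(x_i),\mathbf{1}_{S_1}(x_j)) = -t(1-t)/(N-1)$. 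Combining the two halves and the collision event gives the stated TV bound; since $1/50 > o(1)$ for $N$ large relative to the fixed $\eps(\delta)$, this contradicts the distinguishing advantage from the previous paragraph, proving the theorem.
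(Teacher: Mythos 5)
Your proposal is correct and follows essentially the same route as the paper: reduce the $(2-\delta)$-agnostic learner to a distinguisher between $\mathcal{U}_{2N}$ and a random $p_{S_1,S_2,t}\sim\mathcal{D}_t$ via the gap between $\|p_{S_1,S_2,t}-\mathcal{U}_{2N}\|_1 = t$ and $\opt_2(p_{S_1,S_2,t}) \le \tfrac{t}{2(1-t)}$, and rule out such a distinguisher with $o(\sqrt{N})$ samples by a birthday-paradox argument. You choose slightly different (equally valid) constants for $t$ and $\eps$, and you spell out the indistinguishability step in more detail (via a hypergeometric/covariance expansion showing $\dtv(u^{\otimes m},\E[p^{\otimes m}])=O(m^2/N)$) than the paper's terser sketch, but the structure of the argument is the same.
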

\begin{proof}
We write ${\cal U}_{2N}$ to denote the uniform distribution over $[2N]$.  The following proposition
shows that ${\cal U}_{2N}$ has $L_1$ distance from $p_{S_1,S_2,t}$ almost twice that
of the optimal $2$-flat distribution:

\begin{proposition} \label{prop:opt}
Fix any $0 < t < 1/2$.  
\begin{enumerate}

\item 
For any distribution $p_{S_1,S_2,t}$ in the support of ${\cal D}_t$, we have
\[
\| {\cal U}_{2N} - p_{S_1,S_2,t} \|_1 = t.
\]

\item For any distribution $p_{S_1,S_2,t}$ in the support of ${\cal D}_t$, we have
\[
\opt_2(p_{S_1,S_2,t}) \leq {\frac t 2} \left(1 + {\frac t {1-t}}\right).
\]

\end{enumerate}

\end{proposition}

\begin{proof}
Part (1.) is a simple calculation.  For part (2.), consider the 2-flat distribution
\[
q(i) = 
\begin{cases}
{\frac 1 {2N}} \left(1+ {\frac t {2(1-t)}}\right) & \text{~if~}i \in [N]\\
{\frac 1 {2N}} \left(1- {\frac t {2(1-t)}}\right) & \text{~if~}i \in [N+1,\dots,2N]
\end{cases}
\]
It is straightforward to verify that $\|p_{S_1,S_2,t}-q\|_1 =  {\frac t 2} \left(1 + {\frac t {1-t}}\right)$ as claimed.
\end{proof}

 For a distribution $p$
we write $A^p$ to indicate that algorithm $A$ is given access to i.i.d. 
points drawn from $p$.

The following simple proposition states that no algorithm can successfully 
distinguish between
a distribution $p_{S_1,S_2,t} \sim {\cal D}_t$ and ${\cal U}_{2N}$ using
fewer than (essentially) $\sqrt{N}$ draws:

\begin{proposition} \label{prop:indistinguishable}
There is an absolute constant $c>0$ such that the following holds:   Fix any $0 < t < 1/2$, and 
let $B$ be any ``distinguishing algorithm'' which receives $c \sqrt{N}$ i.i.d. draws from a distribution 
over $[2N]$ and outputs either ``uniform'' or ``non-uniform''.  Then
\begin{equation}
\left|
\Pr[B^{{\cal U}_{[2N]}} \text{~outputs ``uniform''}]
-
\Pr_{p_{S_1,S_2,t} \sim {\cal D}_t}[B^{p_{S_1,S_2,t}} \text{~outputs ``uniform''}]
\right|
\leq 0.01. \label{eq:close}
\end{equation}
\end{proposition}
The proof is an easy consequence of the fact that in both
cases (the distribution is ${\cal U}_{[2N]}$, or the distribution is 
$p_{S_1,S_2,t} \sim {\cal D}_{t}$),
with probability at least 0.99 the $c \sqrt{N}$ draws received by $A$ are a uniform
random set of $c\sqrt{N}$ distinct elements from $[2N]$ 
(this can be shown straighforwardly 
using a birthday paradox type argument).

Now we use Proposition \ref{prop:indistinguishable} to show that 
any 
$(2-\delta)$-semi-agnostic learning algorithm even for $2$-flat distributions must use a sample of size $\Omega(\sqrt{N})$,
and thereby prove Theorem~\ref{thm:main2-formal}.

\medskip

Fix a value of $\delta>0$ and suppose, for the sake of contradiction, that there exists such an algorithm $A$.  We describe
how the existence of such an algorithm $A$ yields a distinguishing algorithm $B$ that
violates Proposition \ref{prop:indistinguishable}.

The algorithm $B$ works as follows, given access to i.i.d. draws
from an unknown distribution $p$.
It first runs algorithm $A$ with its ``$\eps$'' parameter set to $\eps := {\frac {\delta^3}{12(2+\delta)}}$, obtaining (with probability
at least $51/100$) a hypothesis distribution $h$ over $[2N]$ such that $\|h - p\|_1 \leq (2-\delta)\opt_2(p)+ \eps.$  It then computes
the value $\|h - {\cal U}_{2N}\|_1$ of the $L_1$-distance between $h$ and the uniform distribution
(note that this step uses no draws from the distribution).  
If $\|h - {\cal U}_{2N}\|_1 < 3\eps/2$ then it outputs ``uniform'' and otherwise
it outputs ``non-uniform.''

Since $\delta$ (and hence $\eps$) is independent of $N$, the algorithm $B$ makes fewer than $c \sqrt{N}$ draws
from $p$ (for $N$ sufficiently large).
To see that the above-described algorithm $B$ violates (\ref{eq:close}), consider first the case that $p$ is 
${\cal U}_{[2N]}$.  In this case $\opt_2(p)=0$ and so with probability at least 51/100 the hypothesis $h$
satisfies $\|h-{\cal U}_{2N}\|_1 \leq \eps$, and hence algorithm $B$ outputs  ``uniform'' with probability
at least $51/100.$

On the other hand, suppose that $p=p_{S_1,S_2,t}$ is drawn from ${\cal D}_t$, where $t={\frac \delta {2+\delta}}$.  
In this case, with probability at least 51/100 the hypothesis $h$ satisfies 
\[
\|h-p_{S_1,S_2,t}\|_1 \leq (2-\delta)\opt_2(p_{S_1,S_2,t}) + \eps \leq (2-\delta)\cdot {\frac t 2} \cdot  \left(1 + {\frac t {1-t}}\right) + \eps,
\]
by part (2.) of Proposition \ref{prop:opt}.  Since by part (1.) of Proposition \ref{prop:opt} we have $\|{\cal U}_{2N} - 
p_{S_1,S_2,t}\|_1 = t$, the triangle inequality gives that
\[
\|h-{\cal U}_{2N} \|_1 \geq t -  (2-\delta)\cdot {\frac t 2} \cdot  \left(1 + {\frac t {1-t}}\right) - \eps = 2\eps,
\] 
where to obtain the final equality we recalled the settings $\eps =  {\frac {\delta^3}{12(2+\delta)}}$,
$t = {\frac \delta {2+\delta}}$.
Hence algorithm $B$ outputs ``uniform''
with probability at most $49/100$.  Thus we have
\[
\left|
\Pr[B^{U_{[2N]}} \text{~outputs ``uniform''}]
-
\Pr_{p_{S_1,S_2,t} \sim {\cal D}_t}[B^{p_{S_1,S_2,t}} \text{~outputs ``uniform''}]
\right| \geq 0.02
\]
which contradicts (\ref{eq:close}) and proves the theorem.
\end{proof}

As described in the Introduction, via the obvious correspondence that maps distributions over $[N]$ to distributions
over $[0,1)$, we get the following:

\begin{corollary} \label{cor:main2-formal}
Fix any $\delta>0$ and any function $f(\cdot)$. There is no algorithm $A$ with the following property:
given $\eps > 0$ and access to independent draws 
from an unknown distribution $p$
over $[0,1)$, algorithm $A$ makes $f(\eps)$ draws from $p$ and with probability
at least $51/100$ outputs a hypothesis distribution $h$ over $[0,1)$ satisfying
$\|h - p\|_1 \leq (2-\delta)\opt_2(p)+\eps$.

\end{corollary}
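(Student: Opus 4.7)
The plan is to reduce Corollary \ref{cor:main2-formal} to Theorem \ref{thm:main2-formal} through the canonical embedding that sends a distribution $p$ on $[2N]$ to the pdf $\tilde p$ on $[0,1)$ defined by $\tilde p(x) := 2N \cdot p(i)$ for $x \in [(i-1)/(2N), i/(2N))$.

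The first step is to verify that this embedding behaves well with respect to the quantities in the statement. A direct integration gives $\|\tilde p - \tilde q\|_1 = \|p - q\|_1$ for any distributions $p,q$ on $[2N]$, and whenever $q$ is $2$-flat on $[2N]$ its image $\tilde q$ is $2$-flat on $[0,1)$. Consequently $\opt_2(\tilde p) \le \opt_2(p)$, where the left-hand side is the infimum over $2$-flat pdfs on $[0,1)$ and the right-hand side is the infimum over $2$-flat distributions on $[2N]$. In the other direction, for any pdf $h$ on $[0,1)$ I define $\hat h(i) := \int_{(i-1)/(2N)}^{i/(2N)} h(x)\,dx$; by the triangle inequality applied on each subinterval, $\|\hat h - p\|_1 \le \|h - \tilde p\|_1$. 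Finally, to simulate a draw $x \sim \tilde p$ from a draw $i \sim p$, I output $x$ uniformly in $[(i-1)/(2N), i/(2N))$, using no additional draws from $p$.

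The second step is the contradiction itself. Suppose $A$ is an algorithm satisfying the hypotheses of Corollary \ref{cor:main2-formal} with sample complexity $f(\eps)$. For each $N$ I build an algorithm $A_N$ for the setting of Theorem \ref{thm:main2-formal}: convert each $p$-sample to a $\tilde p$-sample as above, invoke $A$ with parameter $\eps$ to obtain a hypothesis pdf $h$ on $[0,1)$, and output $\hat h$ as the hypothesis over $[2N]$. With probability at least $51/100$ the chain $\|\hat h - p\|_1 \le \|h - \tilde p\|_1 \le (2-\delta)\opt_2(\tilde p)+\eps \le (2-\delta)\opt_2(p)+\eps$ holds, so $A_N$ is a $(2-\delta)$-semi-agnostic learner for $2$-flat distributions on $[2N]$ using exactly $f(\eps)$ draws. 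For any fixed $\eps$, $f(\eps)$ is constant in $N$, so for $N$ large enough $f(\eps)$ is much less than $c\sqrt{N}$, contradicting Theorem \ref{thm:main2-formal}.

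There is no real obstacle here; every step is routine once the embedding is in hand. The only items that need care are the inequality $\opt_2(\tilde p) \le \opt_2(p)$, which uses only that a $2$-flat distribution on $[2N]$ pulls back to a $2$-flat pdf on $[0,1)$, and the fact that the $(2-\delta)\opt+\eps$ guarantee transfers cleanly through the embedding, which is immediate from the two displayed bounds. All the hard work has already been done in Theorem \ref{thm:main2-formal}, which supplies the hard instances and their indistinguishability.
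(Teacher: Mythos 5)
Your proof is correct and takes essentially the same approach as the paper, which only remarks that the corollary follows ``via the obvious correspondence that maps distributions over $[N]$ to distributions over $[0,1)$''; you have simply filled in the routine details of that correspondence (the forward embedding $p \mapsto \tilde p$, the simulation of draws, the pullback $\hat h$ of the hypothesis, and the two inequalities $\|\hat h - p\|_1 \le \|h - \tilde p\|_1$ and $\opt_2(\tilde p) \le \opt_2(p)$) exactly as intended.
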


%\newpage

%{\small

\bibliographystyle{alpha}
\bibliography{allrefs}

%}

\newpage

\end{document}